\DeclareMathOperator*{\E}{\mathbb{E}}
\DeclareMathOperator{\kl}{D_{KL}}
\begin{document}

\title{Learning Overspecified Gaussian Mixtures Exponentially Fast with the EM Algorithm}

\titlerunning{Learning Overspecified Gaussian Mixtures}

\author{Zhenisbek Assylbekov\inst{1,2}\and Alan Legg\inst{1}\and Artur Pak\inst{2,3}}
\authorrunning{Assylbekov et al.}

\institute{Department of Mathematical Sciences, Purdue University Fort Wayne, Fort Wayne IN, USA 
\\\email{\{zassylbe,leggar01\}@pfw.edu}
\and
Department of Mathematics, Nazarbayev University, Astana, Kazakhstan
\and
Mohamed bin Zayed University of Artificial Intelligence, Abu Dhabi, UAE
\\\email{artur.pak@nu.edu.kz}
}

\maketitle

\begin{abstract}
We investigate the convergence properties of the EM algorithm when applied to overspecified Gaussian mixture models---that is, when the number of components in the fitted model exceeds that of the true underlying distribution. Focusing on a structured configuration where the component means are positioned at the vertices of a regular simplex and the mixture weights satisfy a non-degeneracy condition, we demonstrate that the population EM algorithm converges exponentially fast in terms of the Kullback-Leibler (KL) distance. Our analysis leverages the strong convexity of the negative log-likelihood function in a neighborhood around the optimum and utilizes the Polyak-Łojasiewicz inequality to establish that an $\epsilon$-accurate approximation is achievable in $O(\log(1/\epsilon))$ iterations. Furthermore, we extend these results to a finite-sample setting by deriving explicit statistical convergence guarantees. Numerical experiments on synthetic datasets corroborate our theoretical findings, highlighting the dramatic acceleration in convergence compared to conventional sublinear rates. This work not only deepens the understanding of EM's behavior in overspecified settings but also offers practical insights into initialization strategies and model design for high-dimensional clustering and density estimation tasks.

\keywords{Overspecification \and Gaussian Mixtures \and Expectation-Maximization.}
\end{abstract}


\section{Introduction and Main Results}\label{sec:intro} 

Let \( {Z}_1, \ldots, {Z}_n \) be a random sample from the standard $d$-variate normal distribution \( \mathcal{N}_d({0}, {I}) \), where \( {0} \in \mathbb{R}^d \) is the mean vector, and \( {I} \in \mathbb{R}^{d \times d} \) is the identity covariance matrix. We aim to fit a \( k \)-component Gaussian mixture model of the form  
\begin{equation}
\pi_1 \cdot \mathcal{N}_d(\mu_1, I) + \ldots + \pi_k \cdot \mathcal{N}_d(\mu_k, {I}) \label{eq:gmm}    
\end{equation}
to this sample. When \( k \geq 2 \), this setting is known as \emph{overspecification}, meaning the fitted model contains more mixture components than the true data-generating process. We assume the location parameters \( \mu = (\mu_1^\top, \ldots, \mu_k^\top)^\top \) are unknown, while the mixture weights \( (\pi_1, \ldots, \pi_k) \) are fixed and satisfy \( \pi_j > 0 \) and \( \sum_{j=1}^k \pi_j = 1 \). 

Let \( f({x}; {\mu}) \) denote the probability density function of the mixture defined in \eqref{eq:gmm}. The maximum likelihood estimator (MLE) of \( {\mu} \) is given by  
\begin{equation}
\hat{\mu} \in \arg\max_{\mu} \frac{1}{n} \sum_{i=1}^n \log f({Z}_i; {\mu}). \label{eq:mle}
\end{equation}
For \( k \neq 1 \), a closed-form solution for \( \hat{\mu} \) does not exist. Instead, \eqref{eq:mle} is typically solved using iterative methods such as the Expectation-Maximization (EM) algorithm \cite{https://doi.org/10.1111/j.2517-6161.1977.tb01600.x}. However, since the log-likelihood function in \eqref{eq:mle} is non-concave, iterative methods generally do not guarantee convergence to the global optimum.

Recent studies have analyzed the behavior of EM in overspecified settings. \citet{DBLP:conf/aistats/DwivediHKWJ020,dwivedi2020singularity} examined the case \( k = 2 \), differentiating between balanced mixtures (\( \pi_1 = \pi_2 = 1/2 \)) and unbalanced mixtures (\( \pi_1 \neq \pi_2 \)). Assuming symmetric means (\( \mu_1 = -\mu_2 \)), they showed that in the unbalanced case, the population EM\footnote{Population EM assumes access to the true data-generating distribution, allowing updates to be computed as exact expectations, free from sampling variability.} algorithm requires \( O(\log(1/\epsilon)) \) steps to obtain an \( \epsilon \)-accurate estimate of the parameter \( \mu^\ast = 0 \). In contrast, for balanced mixtures, the algorithm needs \( \Theta(\log(1/\epsilon)/\epsilon^2) \) steps, making it exponentially slower.

\citet{xu2024toward} investigated the behavior of \emph{gradient EM}\footnote{Gradient EM replaces the M-step of the Expectation-Maximization algorithm with a single gradient ascent step on the Q-function.} in the population setting for general \( k \). Their results show that gradient EM exhibits a slow convergence rate, requiring \( O(1/\epsilon^2) \) iterations to approximate the $k$-component Gaussian mixture \eqref{eq:gmm} to \( \mathcal{N}_d({0}, {I}) \) within an accuracy \( \epsilon \) in the KL metric. Their work imposes no assumptions on the balance of the mixture weights or the arrangement of Gaussian component centers. From this perspective, their result is more general. However, as demonstrated by \citet{dwivedi2020singularity}, in certain overspecified cases, the EM algorithm can achieve exponential convergence. This motivates the following question:
\begin{center}\vspace{5pt}\emph{When learning a mixture of \( k \) Gaussians from $\mathcal{N}_d(0,I)$ data, does there exist a configuration of component centers and mixture weights such that the EM algorithm converges exponentially fast?}\vspace{5pt}\end{center}
Our answer to this question is affirmative, and we present it in the form of the following theorem.
\begin{theorem}\label{thm:population}
Let \( R\in\mathbb{R}^{d\times d} \) be an orthogonal matrix such that for any nonzero $\theta\in\mathbb{R}^d$, the points
\[
\mu_j(\theta) = R^{j-1} \theta, \quad \text{for } j=1,\ldots,k.
\] form the vertices of a regular \((k-1)\)-simplex in $\mathbb{R}^d$, $d\ge k-1$, centered at the origin. 
Consider the \( k \)-component Gaussian mixture
$$
\mathcal{G}(\theta) := \pi_1 \cdot \mathcal{N}_d(\mu_1(\theta), I) + \pi_2 \cdot \mathcal{N}_d(\mu_2(\theta), I) + \dots + \pi_k \cdot \mathcal{N}_d(\mu_k(\theta), I),
$$
where the mixture weights \( \pi_1, \dots, \pi_k \) are fixed, positive,  satisfy \( \sum_{j=1}^{k} \pi_j = 1 \), and their discrete Fourier transform has no zero entries. This mixture is fitted to the standard Gaussian distribution \( \mathcal{N}(0, I) \) using the Population EM algorithm. Let \( \theta_t \) denote the parameter value at iteration \( t \). Then there exists $\gamma>0$ such that the following holds:
\[
D_{\text{KL}}[\mathcal{N}(0,I) \parallel \mathcal{G}(\theta_t)] \leq \kappa^t D_{\text{KL}}[\mathcal{N}(0,I) \parallel \mathcal{G}(\theta_0)],
\]
for $\theta_0$ satisfying $\|\theta_0\|\le\gamma$ and for some constant \( \kappa \in (0,1) \).
\end{theorem}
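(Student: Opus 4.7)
The plan is to exploit the fact that $\theta^* = 0$ makes $\mathcal{G}(0) = \mathcal{N}_d(0,I)$, so $F(\theta) := D_{\text{KL}}[\mathcal{N}(0,I) \parallel \mathcal{G}(\theta)]$ has a unique global minimum there with $F(0) = 0$ and $\nabla F(0) = 0$. The route to exponential convergence is then classical: establish local strong convexity of $F$ at the origin, which implies a Polyak–Łojasiewicz (PL) inequality on a ball $B_\gamma = \{\theta: \|\theta\| \le \gamma\}$, and combine PL with the monotonicity plus sufficient-decrease properties of population EM for exponential families. If $\|\theta_0\| \le \gamma$ is sufficiently small, $F(\theta_0)$ is small too, and monotonicity traps all iterates in $B_\gamma$.

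The analytic heart of the argument is the computation of $\nabla^2 F(0)$. Using orthogonality of $R$, the mixture density factors as $f(x;\theta) = \phi(x)\, e^{-\|\theta\|^2/2}\, g(\theta,x)$ with $g(\theta,x) = \sum_{j=1}^k \pi_j \exp(\theta^\top R^{-(j-1)} x)$. A direct second-order expansion of $\log f$ in $\theta$, combined with $\mathbb{E}[X] = 0$ and $\mathbb{E}[XX^\top] = I$ under $X \sim \mathcal{N}(0,I)$, yields after cancellation
\[
\nabla^2 F(0) \;=\; A A^\top, \qquad A \;:=\; \sum_{j=1}^k \pi_j R^{-(j-1)}.
\]

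To show $AA^\top \succ 0$, I would diagonalize $R$ over $\mathbb{C}$. The regular-simplex and centering conditions force $R^k = I$ and $\sum_{j=0}^{k-1} R^j = 0$ on the affine hull of the simplex, so $R$'s eigenvalues on that span are the nontrivial $k$-th roots of unity $\omega^m = e^{2\pi i m/k}$, $m = 1,\dots,k-1$; on the orthogonal complement (when $d > k-1$) $R$ may be taken to act as the identity, where $A$ reduces to the scalar $\sum_j \pi_j = 1$. On an eigenvector of $R$ with eigenvalue $\omega^m$, $A$ acts by the scalar $\sum_{j=1}^k \pi_j \omega^{-m(j-1)}$, which is exactly the $m$-th entry of the discrete Fourier transform of $(\pi_1,\dots,\pi_k)$. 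The hypothesis that this DFT has no zero entries therefore guarantees $A$ is nonsingular, and hence $\nabla^2 F(0) \succeq \alpha I$ for some $\alpha > 0$.

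From here, continuity of the Hessian yields $\gamma > 0$ such that $\nabla^2 F(\theta) \succeq (\alpha/2) I$ throughout $B_\gamma$, which in particular gives the PL inequality $\|\nabla F(\theta)\|^2 \ge \alpha\, F(\theta)$ on $B_\gamma$. Population EM for Gaussian mixtures satisfies the standard monotonicity $F(\theta_{t+1}) \le F(\theta_t)$, and a localized smoothness/sufficient-decrease estimate on $B_\gamma$ gives $F(\theta_t) - F(\theta_{t+1}) \ge c\, \|\nabla F(\theta_t)\|^2$ for some $c > 0$; chaining with PL yields $F(\theta_{t+1}) \le \kappa F(\theta_t)$ with $\kappa := 1 - c\alpha \in (0,1)$, and iteration delivers the theorem. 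I expect the hardest step to be the Hessian/DFT argument: tracking how $R$ and all of its powers codiagonalize and reading off the spectrum of $A$ as a Fourier transform of the weights requires care, especially in the ambient dimension $d \ge k-1$, where one must also ensure the Hessian does not collapse on the orthogonal complement of the simplex's span. A secondary difficulty is making the sufficient-decrease constant $c$ explicit enough that $\kappa$ admits a clean quantitative form, which means localizing smoothness bounds for the EM operator to $B_\gamma$ rather than inheriting them from generic PL-descent lemmas.
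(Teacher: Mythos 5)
Your proposal matches the paper's proof in all essentials: the Hessian of the KL objective at $\theta^\ast=0$ equals $AA^{\top}$ with $A=\sum_{j}\pi_j R^{j-1}$, invertibility of $A$ follows from the nonvanishing discrete Fourier transform of the weights via diagonalization of $R$ over $\mathbb{C}$, and local strong convexity yields a Polyak--{\L}ojasiewicz inequality that is chained with a sufficient-decrease bound to give the linear rate. The one detail worth making explicit is that the paper derives the sufficient decrease not from generic EM monotonicity but from the exact identity $\nabla L(\theta)=\theta-M(\theta)$ (so population EM is unit-step gradient descent on the negative population log-likelihood), combined with the local smoothness bound $\|\nabla^2 L(\theta)\|_{\mathrm{op}}\le 3/2$ near the origin, which is precisely the ``localized smoothness/sufficient-decrease estimate'' you defer.
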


At first glance, the choice of placing Gaussian component centers at the vertices of a regular \((k-1)\)-simplex may seem arbitrary. However, this configuration naturally arises in the context of Gaussian mixture learning. 

A common approach to initializing Gaussian mixture components in the EM algorithm is via Lloyd's variant of the \( k \)-means algorithm \cite{DBLP:journals/tit/Lloyd82}. We can show that the vertices of a regular \((k-1)\)-simplex (with a particular radius) form a fixed point of Lloyd's algorithm when applied to \( \mathcal{N}(0, I) \) at the population level (Section~\ref{sec:kmeans}). Figure~\ref{fig:kmeans} illustrates this for finite samples and $k=2,3$.
\begin{figure}[htbp]
    \centering
    \subfloat[$k=3$ on $\mathcal{N}_2(0,I)$]{%
        \includegraphics[width=0.45\textwidth]{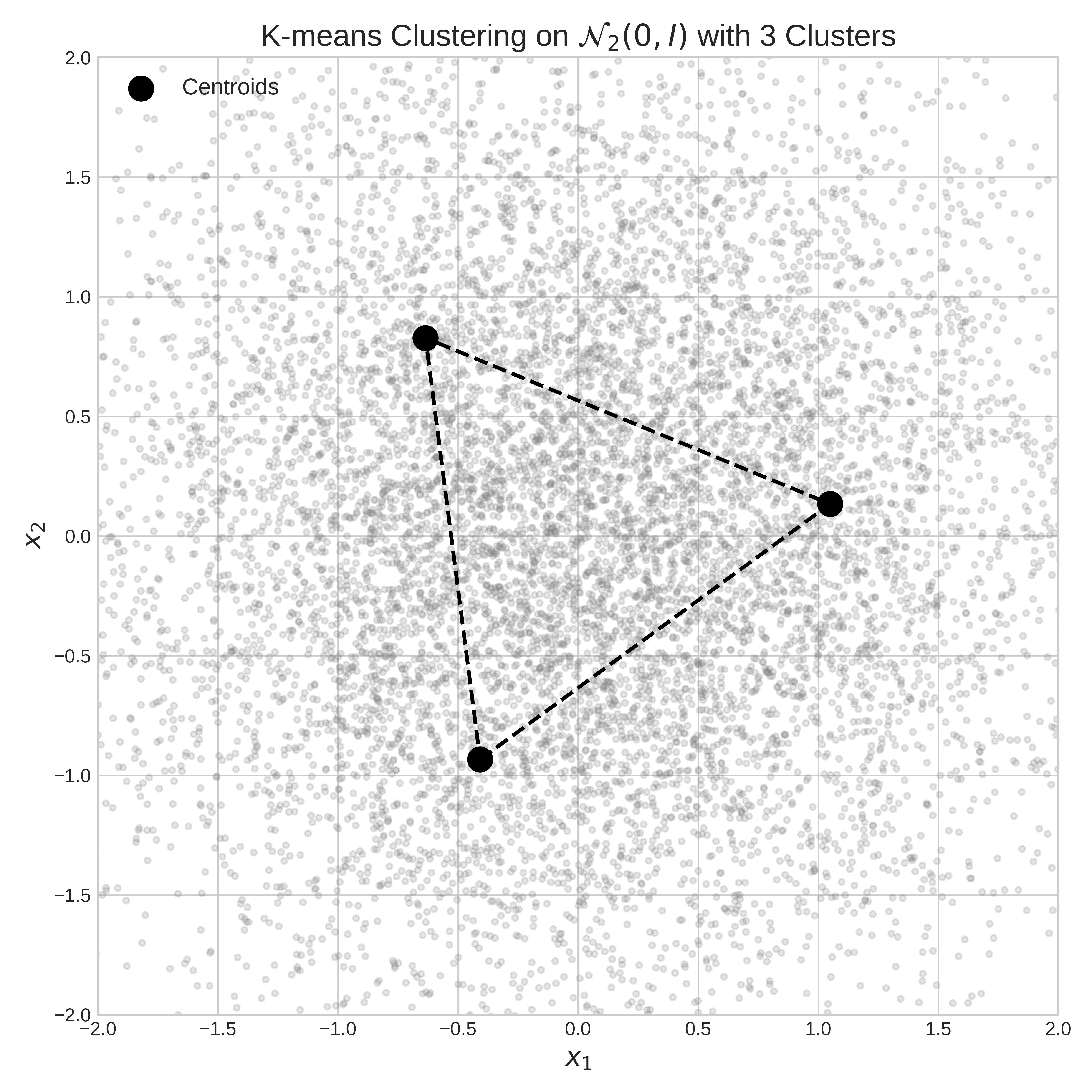}%
    }
    \hfill
    \subfloat[$k=4$ on $\mathcal{N}_3(0,I)$]{%
        \includegraphics[width=0.45\textwidth]{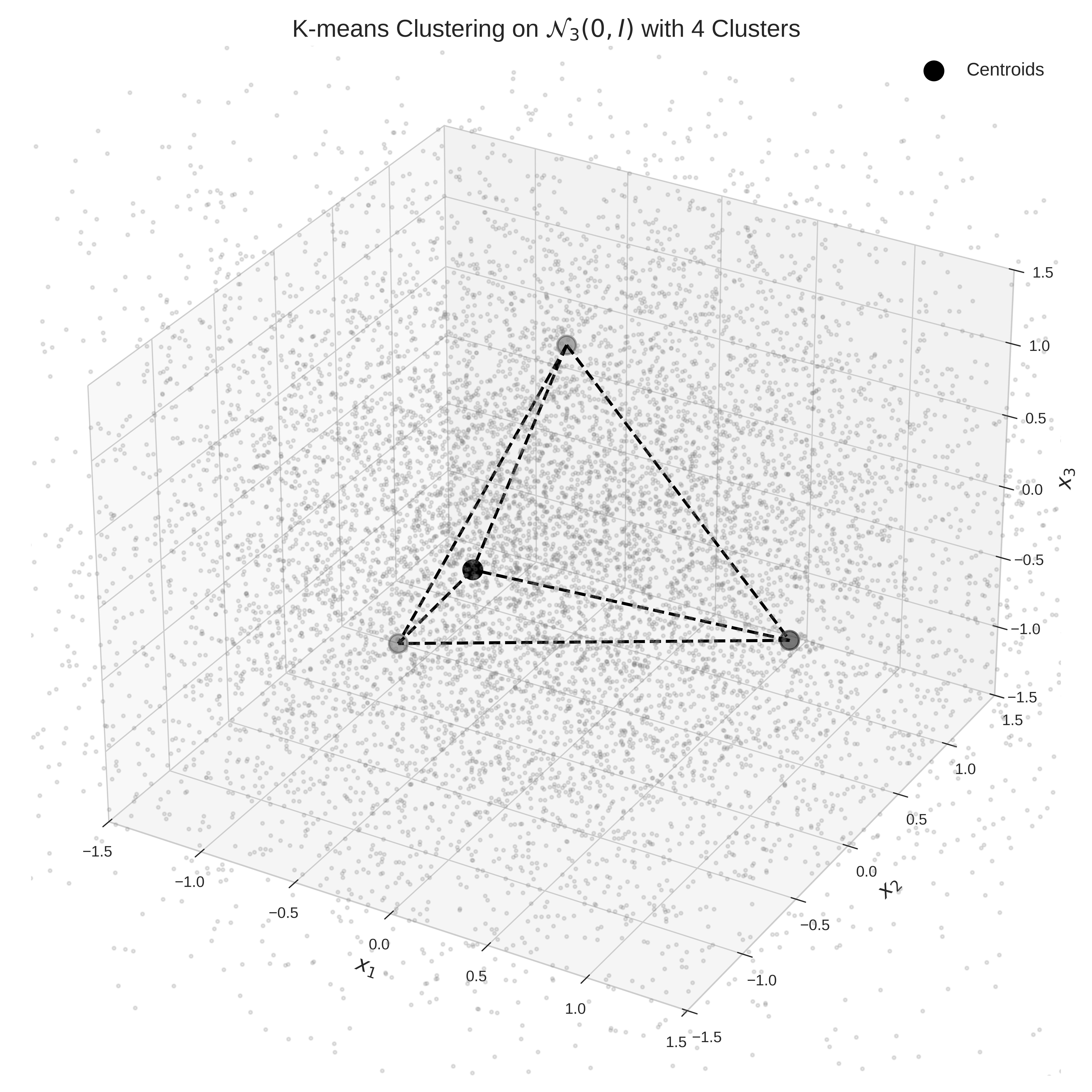}%
    }
    \caption{K-means clustering on standard Gaussian data. \emph{Left:} 10,000 samples in $\mathbb{R}^2$ are clustered into 3 groups; centroids (black markers) are connected by dashed lines to form a near-equilateral triangle. \emph{Right:} 10,000 samples in $\mathbb{R}^3$ are clustered into 4 groups; centroids (black markers) connected by dashed lines approximate a regular tetrahedron.}
    \label{fig:kmeans}
\end{figure}
This suggests that the regular \((k-1)\)-simplex is a natural initialization choice for the EM algorithm when learning an overspecified Gaussian mixture from data generated by a single Gaussian.

Following \citet{xu2024toward}, we focus on the convergence of the fitted distribution to the true distribution in the KL metric rather than the convergence of the parameters to zero in the Euclidean metric, as studied by \citet{dwivedi2020singularity}. However, our analysis fundamentally differs from both works. We find that the expected negative log-likelihood function is strongly convex in the neighborhood of the optimum and satisfies the so-called \emph{Polyak-Łojasiewicz} inequality \citep{POLYAK1963864,lojasiewicz1963topological}. This significantly simplifies the analysis of the convergence of the KL distance between the fitted model and the true distribution.

An immediate consequence of Theorem~\ref{thm:population} is that the Population EM algorithm requires $O\left(\log\left(1/\epsilon\right)\right)$ steps to approximate the mixture \( \mathcal{G}(\theta) \) to \( \mathcal{N}(0, I) \) within \( \epsilon \) in the KL metric. This is exponentially faster than the general result of \citet{xu2024toward}. Moreover, by leveraging the now-standard approach of \citet{DBLP:journals/corr/BalakrishnanWY14}, we can translate the fast convergence of the population EM  into the following finite-sample guarantee for the sample-based EM algorithm.

\begin{theorem}\label{thm:main}
    Under the assumptions of Theorem~\ref{thm:population} on the structure of the Gaussian mixture, there exists $\gamma>0$ such that for any initialization $\theta_0$ with $\|\theta_0\|\le\gamma$, the EM algorithm produces a sequence of parameter estimates \( \hat{\theta}_t \) satisfying
    \begin{equation}
        D_\text{KL}\left[\mathcal{N}({0},{I})\parallel\mathcal{G}(\hat\theta_T)\right]\le c_1\|\theta_0\|^2\frac{\log(1/\delta)}{n},\label{eq:kl_bound}
    \end{equation}
    for \( T \ge c_2 \log\frac{n}{\log(1/\delta)} \) with probability at least \( 1 - \delta \).
\end{theorem}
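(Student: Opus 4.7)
\smallskip\noindent\emph{Proof proposal for Theorem~\ref{thm:main}.}
My plan is to port the population-level exponential convergence of Theorem~\ref{thm:population} to the finite-sample regime via the classical framework of \citet{DBLP:journals/corr/BalakrishnanWY14}. Let $M(\theta)$ denote the population EM operator and $M_{n}(\theta)$ its sample counterpart built from $Z_{1},\ldots,Z_{n}$; then the sample EM iterates are $\hat\theta_{t+1}=M_{n}(\hat\theta_{t})$, and $\theta^{\ast}=0$ is a fixed point of $M$.

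The first step is to upgrade the KL contraction of Theorem~\ref{thm:population} into a parameter-space contraction on the ball $\|\theta\|\le\gamma$. The strong convexity and smoothness of the negative expected log-likelihood at $\theta=0$---the very ingredients used to derive the P\L{} inequality inside the proof of Theorem~\ref{thm:population}---give the two-sided comparison $c_{\ell}\|\theta\|^{2}\le D_{\mathrm{KL}}[\mathcal{N}(0,I)\parallel\mathcal{G}(\theta)]\le c_{u}\|\theta\|^{2}$ in a neighborhood of the origin. Sandwiching KL between these quadratic comparators and applying Theorem~\ref{thm:population} to one (or a fixed block of) population EM step(s) then yields a contraction $\|M(\theta)\|\le\sqrt{\kappa'}\,\|\theta\|$ for some $\kappa'\in(0,1)$, possibly after shrinking $\gamma$.

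The second step, which is the main obstacle, is a localized statistical perturbation bound. Standard empirical-process concentration (Rademacher complexity or chaining) applied to the smooth, uniformly bounded family of first-order optimality equations defining $M_{n}(\theta)$ over $\|\theta\|\le\gamma$ should yield
\[
\|M_{n}(\theta)-M(\theta)\|\le c_{3}\,\|\theta\|\,\varepsilon_{n},\qquad\varepsilon_{n}:=\sqrt{\tfrac{\log(1/\delta)}{n}},
\]
with probability at least $1-\delta$. The linear-in-$\|\theta\|$ factor is the delicate ingredient: it requires Taylor-expanding $M_{n}-M$ around the fixed point $\theta=0$, exploiting $M(0)=0$, and showing that the constant term $M_{n}(0)-M(0)=\bigl(\sum_{j}\pi_{j}R^{-(j-1)}\bigr)\bar Z$ is negligible compared with $\|\theta\|\varepsilon_{n}$ in the regime of interest---this is essentially the price of the $\|\theta_{0}\|^{2}$ prefactor in \eqref{eq:kl_bound}, since a purely uniform bound $\|M_{n}-M\|\lesssim\varepsilon_{n}$ would only yield a KL rate of $\log(1/\delta)/n$ without the initialization-dependent improvement.

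Combining the two bounds gives the stochastic recursion $\|\hat\theta_{t+1}\|\le\bar\kappa\,\|\hat\theta_{t}\|$ with $\bar\kappa:=\sqrt{\kappa'}+c_{3}\varepsilon_{n}\in(0,1)$ valid once $n\gtrsim\log(1/\delta)$, after a union bound over the $T$ iterates. Iterating yields $\|\hat\theta_{T}\|\le\bar\kappa^{T}\|\theta_{0}\|$, and choosing $T\ge c_{2}\log(n/\log(1/\delta))$ so that $\bar\kappa^{T}\le\varepsilon_{n}$ produces $\|\hat\theta_{T}\|^{2}\le\|\theta_{0}\|^{2}\log(1/\delta)/n$. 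The upper smoothness bound $D_{\mathrm{KL}}[\mathcal{N}(0,I)\parallel\mathcal{G}(\hat\theta_{T})]\le c_{u}\|\hat\theta_{T}\|^{2}$ from Step~1 then converts this parameter-space estimate into the claimed KL bound \eqref{eq:kl_bound} with $c_{1}=c_{u}$.
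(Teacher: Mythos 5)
Your overall route---a population-level contraction in parameter space, a statistical perturbation bound on $M_n-M$, an iterated stochastic recursion in the style of \cite{DBLP:journals/corr/BalakrishnanWY14}, and a quadratic comparison to convert $\|\hat\theta_T\|^2$ into KL---is the same one the paper follows. But there is a genuine gap at the junction of your Steps~2 and~3. The multiplicative recursion $\|\hat\theta_{t+1}\|\le\bar\kappa\,\|\hat\theta_t\|$ requires the pointwise bound $\|M_n(\theta)-M(\theta)\|\le c_3\|\theta\|\varepsilon_n$, and that bound is false near the origin: with $A=\sum_j\pi_jR^{j-1}$ and $\bar Z=\tfrac1n\sum_iZ_i$ one has $M_n(0)-M(0)=A^\top\bar Z$, so $\|M_n(0)-M(0)\|\asymp\sqrt{d/n}$ with high probability, whereas your claimed bound forces it to vanish at $\theta=0$. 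Equivalently, your recursion would drive $\|\hat\theta_T\|\to0$ as $T\to\infty$, but the sample EM iterates converge to the (generically nonzero) fixed point of $M_n$, which sits at distance of order $\sqrt{d/n}$ from the origin. The constant term you yourself flag as the ``delicate ingredient'' is not negligible relative to $\|\theta\|\varepsilon_n$---it is of the same order as $\varepsilon_n$ and dominates $\|\theta\|\varepsilon_n$ for all $\|\theta\|$ below a constant---so it cannot be Taylor-expanded away. The paper avoids this entirely: it proves only a \emph{uniform} deviation bound over the initialization ball, $\sup_{\|\theta\|\le r}\|M_n(\theta)-M(\theta)\|\le cr\sqrt{(d+\log(1/\delta))/n}$ (Lemma~\ref{lem:perturb}), and runs the \emph{additive} recursion $\|\hat\theta_{t+1}\|\le\kappa\|\hat\theta_t\|+\varepsilon_M(n,\delta)$, which converges to a ball of radius $\varepsilon_M/(1-\kappa)\asymp\|\theta_0\|\sqrt{(d+\log(1/\delta))/n}$ rather than to zero; this already yields \eqref{eq:theta_bound} and hence, after the quadratic conversion, \eqref{eq:kl_bound}.

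Two secondary points. First, your Step~1 derivation of the parameter contraction from the KL contraction via the sandwich $c_\ell\|\theta\|^2\le D_{\mathrm{KL}}\le c_u\|\theta\|^2$ only gives $\|M(\theta)\|\le\sqrt{\kappa\,c_u/c_\ell}\,\|\theta\|$, and $\kappa\,c_u/c_\ell$ need not be below one (the condition number can eat the per-step KL gain); you would indeed have to block several iterations, which then complicates the perturbation analysis of the blocked sample operator. The paper reads the one-step contraction directly off the Jacobian, $\frac{\partial M}{\partial\theta}(0)=I-AA^\top$ with all eigenvalues strictly below one (Corollary~\ref{cor:contraction}), which is cleaner. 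Second, your final conversion $D_{\mathrm{KL}}[\mathcal{N}(0,I)\parallel\mathcal{G}(\hat\theta_T)]\le c_u\|\hat\theta_T\|^2$ is fine and matches in substance the paper's use of convexity, $L(\hat\theta_T)-L(0)\le\nabla L(\hat\theta_T)^\top\hat\theta_T\precsim\|\hat\theta_T\|^2$.
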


The proof of Theorem~\ref{thm:main} is based on a perturbation bound that relates the sample-based EM operator to its population-level counterpart (Lemma~\ref{lem:perturb} in Section~\ref{app:perturb_bound}). In turn, the proof of the latter utilizes standard arguments to derive Rademacher complexity bounds.

The theoretical insights presented above are  supported by our numerical experiments. In particular, Figure~\ref{fig:population_em} demonstrates the exponential decay of the KL divergence over EM iterations under various mixture weight configurations, while Figure~\ref{fig:sample_em} reveals how the divergence decreases as the sample size increases. Together, these figures provide an intuitive visualization of the convergence dynamics and statistical guarantees established by our analysis.

\begin{figure}[htbp]
    \centering
    \includegraphics[width=0.65\linewidth]{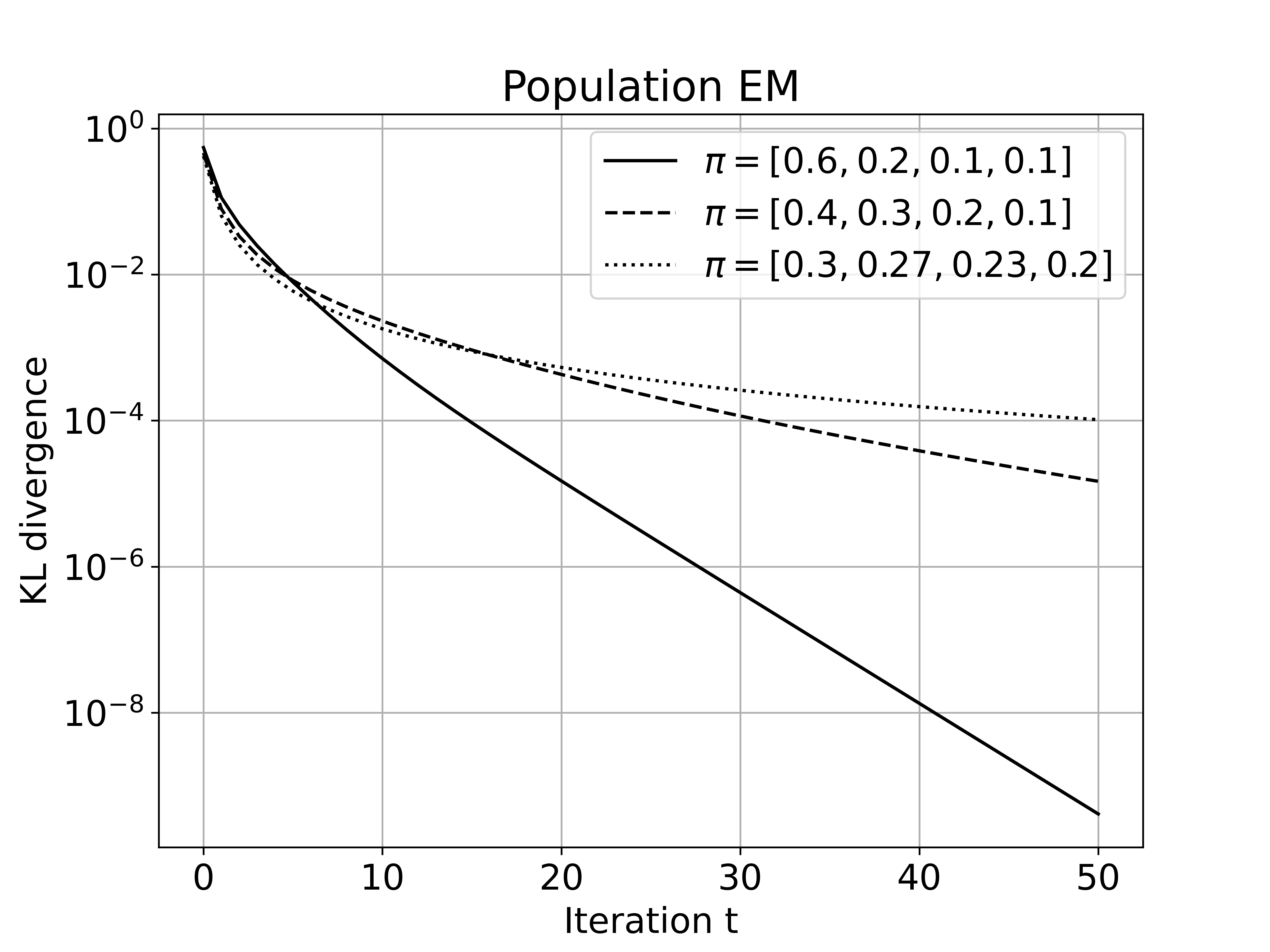}
    \caption{Convergence of Population EM: The plot shows the evolution of the KL divergence versus the number of EM iterations for three different sets of mixture weights. The curves correspond to varying levels of imbalance illustrating how the choice of weights influences convergence speed.}
    \label{fig:population_em}
\end{figure}
\begin{figure}[htbp]
    \centering
    \includegraphics[width=0.65\linewidth]{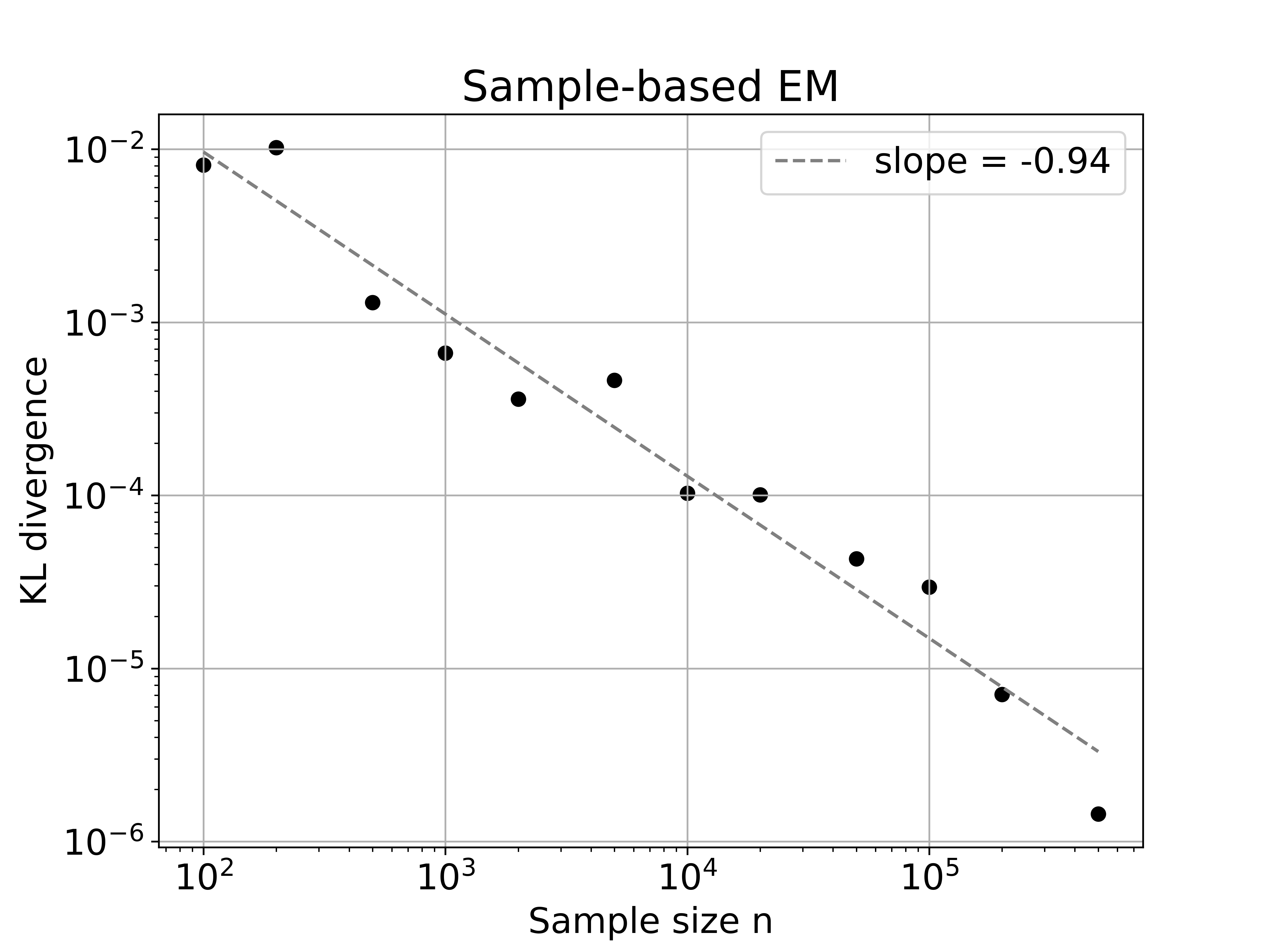}
    \caption{Sample-Based EM Performance: The figure plots the final KL divergence against the sample size $n$ on a log-log scale. It demonstrates how increasing the number of samples improves the accuracy of the EM estimate by reducing the divergence between the fitted mixture and the true 
    $\mathcal{N}(0,I)$ distribution}
    \label{fig:sample_em}
\end{figure}

To summarize, our work makes the following key contributions:

\begin{itemize}
    \item We demonstrate that the EM algorithm can achieve \emph{exponential convergence} in the KL metric when learning an overspecified mixture of \( k \) Gaussian components under a specific structured configuration of mixture centers and weights. This contrasts with prior work \citep{xu2024toward}, which establishes only sublinear convergence rates in general settings.

    \item We develop a novel analytical framework based on the \emph{Polyak-Łojasiewicz inequality}, leveraging the strong convexity of the expected negative log-likelihood function near the optimum. This significantly simplifies the convergence analysis compared to previous approaches.

    \item We establish an explicit \emph{finite-sample guarantee} for learning an overspecified mixture of \( k \) Gaussians with the EM algorithm.  
\end{itemize}

These contributions provide new insights into the role of mixture structure in the efficiency of EM and identify settings where the algorithm achieves fast convergence rates.

\paragraph{Notation.}
Lowercase letters (${x}$) denote vectors in $\mathbb{R}^d$, uppercase letters (${A}$, ${X}$) denote matrices and random vectors. The Euclidean norm is denoted by $\|{x}\|:=\sqrt{{x}^\top{x}}$. We denote $\{1,2,\ldots,k\}$ with $[k]$.

The probability density function of ${Z}\sim\mathcal{N}({0},{I})$, where ${I}$ is a $d\times d$ identity matrix, is denoted by $\phi({z})$. The cumulative distribution function of $Z\sim\mathcal{N}(0,1)$ is denoted by $\Phi(z)$. 

Given $f:\,\mathbb{R}\to\mathbb{R}$ and $g:\,\mathbb{R}\to\mathbb{R}_+$, we write $f\precsim g$  if there exist $x_0\in\mathbb{R}$, $c\in\mathbb{R}_+$ such that for all $x>x_0$ we have $|f(x)|\le c g(x)$. When $f:\mathbb{R}\to\mathbb{R}_+$, we write $f\asymp g$  if $f\precsim g$ and $g\precsim f$. We use $c$, $c_1$, $c_2$, etc. to denote some
universal constants (which might change in value each time they appear).

\subsection*{Related Work}

Research on the Expectation-Maximization (EM) algorithm and its convergence behavior in Gaussian mixture models has advanced rapidly. \citet{DBLP:journals/corr/BalakrishnanWY14} introduced a framework to delineate the region of convergence in terms of distribution parameters. Their work contrasted a population-level analysis with the sample-based implementation commonly used in practice, focusing specifically on the well-specified case of $k=2$ components and addressing both balanced and unbalanced scenarios.

Within this context, significant effort has been devoted to developing initialization strategies that guarantee convergence to the global optimum. \citet{klusowski2016statistical} demonstrated that local convergence can occur over a broader region than previously identified for the two-component case, while \citet{zhao2020statistical} investigated how initialization affects mixtures with an arbitrary number of well-separated components. In addition, \citet{daskalakis2017ten} provided global convergence guarantees for a two-component model with symmetrically positioned mean vectors. For mixtures with $k$ well-separated components, \citet{segol2021improved} proved that convergence is assured even when the algorithm is initialized near the midpoint between clusters, refining the estimation error bounds and extending the analysis to Gradient EM—a variant of the classical EM algorithm. Moreover, \citet{yan2017convergence} further analyzed the convergence rate and local contraction radius of Gradient EM for an arbitrary number of mixture components.

Another major line of inquiry has focused on model misspecification. \citet{dwivedi2018theoretical} examined an underspecified scenario, where a two-component Gaussian mixture is fitted to data generated by a three-component mixture, and characterized the resulting bias while also exploring the influence of initialization on convergence. The benefits of overspecified mixture models have been recognized by \citet{dwivedi2020singularity}, \citet{DBLP:conf/aistats/DwivediHKWJ020}, \citet{chen2024local}, and others. In particular, \citet{dwivedi2020singularity} and \citet{DBLP:conf/aistats/DwivediHKWJ020} studied the case of fitting two Gaussian components to data from a single Gaussian distribution. They compared balanced and unbalanced scenarios, demonstrating that in sample-based EM the unbalanced case converges at a statistical rate of $O(1/\sqrt{n})$, in contrast to $O(\sqrt[4]{1/n})$ for the balanced case when estimating mean vectors under both known and estimated isotropic covariance structures. They further showed that the algorithmic convergence rate is exponentially faster in the unbalanced setting.

Bayesian approaches to model overspecification, as discussed in \cite{rousseau}, have revealed that the estimated mixture weights can vary greatly, often causing some components to become redundant and allowing for model refinement by discarding those with very small weights. In addition, \citet{chen2024local} found that even spurious local minima of the negative log-likelihood retain structural information that is valuable for identifying component means, highlighting the advantages of overspecification over underspecification—a contrast often described as ``many-fit-one'' versus ``one-fit-many.'' Furthermore, \citet{dasgupta2013two} proposed a method for finite mixture overspecification by recommending that models be deliberately initialized with $\frac{\log(k)}{w_{\min}}$ clusters, where $w_{\min}$ denotes the smallest weight, to substantially accelerate convergence.

While much of the literature has focused on convergence in terms of distribution parameters, investigations measuring the quality of fit using the Kullback-Leibler (KL) divergence are relatively few. \citet{ghosal2001} derived a statistical convergence rate of $(\log n)^\kappa/\sqrt{n}$ in Hellinger distance, which translates to a lower bound of $(\log n)^{2\kappa}/n$ in KL divergence; however, their analysis was confined to well-specified models and did not consider algorithmic factors. \citet{dwivedi2018theoretical} also employed KL divergence in the context of underspecified mixtures, but, to our knowledge, the use of KL divergence in overspecified mixtures was first explored by \citet{xu2024toward}. They obtained KL divergence bounds for the population version of Gradient EM applied to a $k$-component mixture with known variances. In contrast, our work extends these results by analyzing both population and sample-based EM under a structured configuration of mixture centers and weights, and importantly, we establish an \emph{exponentially faster} algorithmic convergence rate in KL divergence than that reported by \citet{xu2024toward}.

\section{Initialization with $k$-means}\label{sec:kmeans}

Initialization is a critical step in the Expectation–Maximization (EM) algorithm, particularly in overspecified settings where the number of mixture components exceeds the true number. A common strategy is to first run the $k$–means algorithm (i.e., Lloyd's algorithm) on the data and then use the resulting cluster centers to initialize the EM algorithm.

When the data are generated from a single Gaussian distribution $\mathcal{N}(0,I)$, one observes that Lloyd's algorithm exhibits a natural fixed–point property under a symmetric configuration. In particular, consider initializing the $k$ centers at the vertices of a regular $(k-1)$–simplex centered at the origin. That is, let
\[
\mu_i = r\,v_i,\quad i=1,\dots,k,
\]
where the vectors $v_1,\dots,v_k\in \mathbb{R}^d$ (with $d\ge k-1$) are unit vectors forming the vertices of a regular simplex, and $r>0$ is a scaling factor.

A key observation is that the Voronoi partition induced by these centers depends only on the directions $v_i$ and not on the scalar $r$. Consequently, the conditional expectations computed in the Lloyd update—i.e., the new centers—are also determined solely by the angular configuration. In fact, one may show that the Lloyd update maps the configuration to
\[
\mu_i' = R_0\,v_i,\quad i=1,\dots,k,
\]
where $R_0>0$ is determined by the radial integrals of the Gaussian density. Thus, the fixed–point condition $\mu_i' = \mu_i$ for all $i$ is equivalent to choosing $r = R_0$.

This fixed–point property suggests that initializing the EM algorithm with a regular simplex (properly scaled) is natural in the context of overspecified Gaussian mixtures. In the proof of the following proposition (Section~\ref{app:kmeans_proof}), we rigorously analyze this phenomenon by first characterizing the Voronoi partition induced by a regular simplex and then proving that there exists a unique scaling $r>0$ such that the configuration
\[
\{\mu_i = r\,v_i :\, i=1,\dots,k\}
\]
remains invariant under the population–level Lloyd update.

\begin{proposition}\label{prop:kmeans}
Let $d\ge k-1$, and suppose that 
\[
v_1,\dots,v_k\in \mathbb{R}^d
\]
are unit vectors that form the vertices of a regular simplex in some $(k-1)$--dimensional  subspace of $\mathbb{R}^d$,
\[
\|v_i\|=1,\quad \text{for } i=1,\dots,k,\quad \text{with}\quad \sum_{i=1}^k v_i=0,
\]
with the pairwise inner products being constant for $i\neq j$. For any $r>0$, define centers
\[
\mu_i=r\,v_i,\quad i=1,\dots,k,
\]
and let the Voronoi cells be
\[
V_i=\{\, x\in\mathbb{R}^d : \|x-\mu_i\|\le \|x-\mu_j\| \text{ for all } j\neq i\,\}.
\]
Then there exists a unique $r>0$ such that if one performs the population-level Lloyd update
\[
\mu_i'=\frac{\displaystyle\int_{V_i} x\,\phi(x)\,dx}{\displaystyle\int_{V_i}\phi(x)\,dx},
\]
one obtains $\mu_i'=\mu_i$ for all $i=1,\dots,k$. That is, the configuration
\[
\{\mu_i=r\,v_i :\, i=1,\dots,k\}
\]
is a fixed point of Lloyd's algorithm.
\end{proposition}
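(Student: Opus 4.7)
The strategy is to exploit the full symmetry of the simplex-plus-Gaussian configuration to reduce the fixed-point equation to a single scalar equation in the radius $r$, then argue existence and uniqueness of its solution.

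First, I would observe that the Voronoi partition is scale-invariant. Since $\|\mu_i\|=\|\mu_j\|=r$, the bisector inequality $\|x-\mu_i\|^2\le\|x-\mu_j\|^2$ reduces to $x\cdot(v_i-v_j)\ge 0$, which is independent of $r$. Thus the cells $V_i$ are determined purely by the directions $\{v_j\}$, and the Lloyd update can only alter the radial scale of the configuration.

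Second, I would use the symmetry of the regular simplex to force the updated centers to be colinear with the $v_i$'s. Each permutation $\sigma\in S_k$ of the vertices is realized by some orthogonal transformation $O_\sigma\in O(d)$, which preserves $\phi$ and maps $V_i$ to $V_{\sigma(i)}$. The stabilizer of the index $i$ is isomorphic to $S_{k-1}$, and its induced representation on the $(k-2)$-dimensional subspace $v_i^\perp\cap\mathrm{span}(v_1,\ldots,v_k)$ is the standard irreducible representation of $S_{k-1}$, which has no nonzero invariant vector; meanwhile, on the orthogonal complement of $\mathrm{span}(v_1,\ldots,v_k)$ the cell $V_i$ is invariant under the full $O(d-k+1)$. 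Averaging $\int_{V_i}x\,\phi(x)\,dx$ over these symmetries kills every component orthogonal to $v_i$, so one obtains $\mu_i'=R_0\,v_i$ with a constant $R_0>0$ that depends only on the simplex geometry and, crucially, is independent of $r$. Positivity of $R_0$ follows by summing the Voronoi inequalities $x\cdot v_i\ge x\cdot v_j$ over $j\neq i$ and using $\sum_j v_j=0$ to conclude that $x\cdot v_i\ge 0$ on $V_i$, strictly on a set of positive Gaussian measure.

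Finally, the fixed-point condition $\mu_i'=\mu_i$ collapses to the single scalar equation $R_0=r$, which has the unique solution $r=R_0$ and simultaneously settles the condition for every $i$. The main technical hurdle is spelling out the symmetry argument rigorously: identifying the subgroup of $O(d)$ that realizes the $S_k$ action on the simplex, verifying that the Voronoi cells transform covariantly under it, and invoking the vanishing of $S_{k-1}$-invariants in the standard representation. This is routine representation theory, but it needs some care when $d>k-1$ because of the extra rotational symmetry in the orthogonal complement of the simplex's span---which in fact makes the argument easier rather than harder, since it gives automatic $O(d-k+1)$-invariance of the centroid in those directions.
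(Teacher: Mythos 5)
Your proposal is correct and shares the paper's overall architecture: scale-invariance of the Voronoi partition, symmetry forcing $\mu_i'=R_0\,v_i$ with $R_0$ independent of $r$, and the fixed-point condition collapsing to the scalar equation $r=R_0$. Where you differ is in how the middle step is executed. The paper passes to spherical coordinates, factors the Gaussian integral over $V_i=\{\rho u:\rho\ge0,\ u\in U_i\}$ into a radial ratio times the normalized angular average $\int_{U_i}u\,d\sigma(u)\big/\int_{U_i}d\sigma(u)$, and asserts that this average ``is exactly $v_i$ by symmetry''; strictly speaking that average is only \emph{proportional} to $v_i$ (being an average of unit vectors it has norm less than $1$ whenever $U_i$ is not a single point, e.g.\ for $k=2$, $d=2$ it equals $\tfrac{2}{\pi}v_1$), though this is harmless since the proportionality constant is the same for all $i$ and gets absorbed into $R_0$. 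Your representation-theoretic route --- the stabilizer $S_{k-1}$ of vertex $i$ acting by its standard representation on $v_i^\perp\cap\mathrm{span}(v_1,\dots,v_k)$ with no nonzero invariants, plus $O(d-k+1)$ on the orthogonal complement --- establishes colinearity of $\mu_i'$ with $v_i$ directly and more rigorously, and your positivity argument (summing $x\cdot v_i\ge x\cdot v_j$ over $j$ and using $\sum_j v_j=0$ to get $x\cdot v_i\ge0$ on $V_i$) is an explicit justification of $R_0>0$ that the paper leaves implicit. The paper's coordinate computation buys a concrete formula for the radial factor; your argument buys a cleaner and airtight symmetry step. Either way the conclusion and the uniqueness of $r$ follow identically.
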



\section{Population-Level Analysis} \label{sec:pop_level}

We begin by analyzing the behavior of the so-called \emph{population EM}, a theoretical construct that isolates algorithmic complexity from sample complexity. Population EM assumes direct access to the data-generating distribution \( \mathcal{N}({0},{I}) \) and, instead of maximizing the sample-based log-likelihood in \eqref{eq:mle}, optimizes the population log-likelihood:
\begin{equation}
\mathcal{L}(\theta):=\E_{{Z}\sim\mathcal{N}({0},{I})}[\log f({Z};\theta)].\label{eq:e_log_l}    
\end{equation}
The algorithm proceeds iteratively by applying the following two steps:

\begin{itemize}
    \item \emph{Expectation step}: Given the current estimate \( \theta_t \), compute the function
    \[
        Q(\theta,\theta_t):=\E_{Z\sim\mathcal{N}(0,I)}\left[\sum_{j=1}^k w_j(Z;\theta_t)\log\left(\pi_j\cdot\phi\left({Z-R^{j-1}\theta}\right)\right)\right],
    \]
    where 
    \[
    w_j(Z;\theta_t)=\frac{\pi_j\cdot\phi(Z-R^{j-1}\theta_t)}{\sum_{\ell=1}^k\pi_\ell\cdot\phi(Z-R^{\ell-1}\theta_t)}.
    \]
    
    \item \emph{Maximization step}: Update the parameters by solving the optimization problem:
    \[
    \theta_{t+1}\in\arg\max_{\theta}Q(\theta,\theta_t).
    \]
\end{itemize}

In this specific case, where the population EM algorithm is used to fit the mixture \eqref{eq:gmm} to \( \mathcal{N}(0,I) \), the recurrence relations governing the parameter updates can be explicitly derived (Section~\ref{app:EM_upd_proof}). The parameter updates follow the recursion \( \theta_{t+1} = M(\theta_t) \), where
    \begin{equation}
        M(\theta):=\E_{Z\sim\mathcal{N}(0,I)}\left[\sum_{j=1}^k w_j(Z;\theta) (R^{j-1})^\top Z\right].\label{eq:theta_upd}
    \end{equation}

The mapping \( M(\theta) \) is referred to as the \emph{population EM operator}. Notably, it is closely related to the population negative log-likelihood, as stated in the following equation (Section~\ref{app:NLL_M}):

\begin{equation}
\nabla_\theta[-\mathcal{L}(\theta)]=\theta-M(\theta).\label{eq:NLL_M}
\end{equation}

Denote
$L(\theta):=-\mathcal{L}(\theta)$. 
The equation \eqref{eq:NLL_M} implies that 
\[
\theta_{t+1} = \theta_t - \nabla_\theta[L(\theta_t)],
\]
which means that in the given setting, the EM algorithm is equivalent to gradient descent (GD) on $L(\theta)$ with a step size 1. This suggests that standard techniques used in the analysis of GD can be applied to study the convergence of the EM algorithm. One such technique is the Polyak-Łojasiewicz inequality, a sufficient condition for the exponential convergence of GD. We establish this property for \( L(\theta) \) in the following lemma.

\begin{lemma}[Local PL Inequality]
\label{lem:localPL}
Let $\mathcal{L}(\theta)$ be the population log-likelihood function defined by \eqref{eq:e_log_l}. Suppose $\pi_1, \ldots, \pi_k > 0$ are positive real numbers whose discrete Fourier transform has no zero entries. Then there exists \(\delta>0\) such that \(L(\theta):=-\mathcal{L}(\theta)\) satisfies the following local Polyak--Łojasiewicz (PL) inequality in \(\{\theta:\|\theta\|\le \delta\}\):
\begin{equation}
\,\bigl\|\nabla L(\theta)\bigr\|^2
\;\;\ge\;\;
\lambda_{\min} \,\Bigl(L(\theta)\;-\;L(0)\Bigr),\label{eq:pl_ineq}
\end{equation}
where $\lambda_{\min}\le1$ is the smallest eigenvalue of $\nabla^2L(0)$.
\end{lemma}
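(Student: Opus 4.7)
The plan is to obtain \eqref{eq:pl_ineq} as a local consequence of strong convexity of $L$ near the origin, which itself reduces to verifying that $\nabla L(0)=0$ and that $\nabla^2 L(0)$ is positive definite with smallest eigenvalue exactly the $\lambda_{\min}$ appearing in the statement. Once positive definiteness at $0$ is in hand, continuity of $\nabla^2 L$ gives a radius $\delta>0$ with $\nabla^2 L(\theta)\succeq \tfrac{\lambda_{\min}}{2}I$ on the convex ball $B_\delta(0)$, hence $L$ is $\tfrac{\lambda_{\min}}{2}$-strongly convex there with unique minimizer at $0$. The standard quadratic-lower-bound argument (minimize the second-order lower bound over the point where the gradient is evaluated) then yields
\[
L(\theta)-L(0)\le \tfrac{1}{\lambda_{\min}}\|\nabla L(\theta)\|^2,
\]
which rearranges to the PL inequality.

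The critical-point condition is immediate: by \eqref{eq:NLL_M}, $\nabla L(\theta)=\theta-M(\theta)$, and substituting $\theta=0$ in \eqref{eq:theta_upd} gives $M(0)=\sum_j\pi_j(R^{j-1})^\top\E[Z]=0$. For the Hessian, I would differentiate $w_j(Z;\theta)$ at $\theta=0$, using $w_j(Z;0)=\pi_j$, to obtain
\[
\partial_{\theta_b}w_j(Z;0)=\pi_j\Bigl[\bigl((R^{j-1})^\top Z\bigr)_b-\sum_\ell\pi_\ell\bigl((R^{\ell-1})^\top Z\bigr)_b\Bigr],
\]
and then combine this with the Gaussian second-moment identity $\E[((R^{j-1})^\top Z)_a((R^{\ell-1})^\top Z)_b]=(R^{\ell-j})_{ab}$ to arrive at the closed form
\[
\nabla^2 L(0)=I-\nabla M(0)=\sum_{j,\ell=1}^k \pi_j\pi_\ell\, R^{\ell-j}.
\]

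The core of the argument, and the step I expect to carry the real content, is the spectral diagonalization of this circulant-in-$R$ matrix. The hypothesis that $\{R^{j-1}\theta\}_{j=1}^k$ forms a centered regular simplex for every nonzero $\theta$ forces $R^k=I$ together with $\sum_{j=0}^{k-1}R^j=0$, pinning the complex spectrum of $R$ to nontrivial $k$-th roots of unity $\omega^s=e^{2\pi i s/k}$ with $s\in\{1,\dots,k-1\}$. For an eigenvector $v$ of $R$ with eigenvalue $\omega^s$,
\[
\nabla^2 L(0)\,v=\Bigl(\sum_{j,\ell}\pi_j\pi_\ell\,\omega^{s(\ell-j)}\Bigr)v=\Bigl|\sum_\ell \pi_\ell\,\omega^{s\ell}\Bigr|^2 v=|\hat\pi(s)|^2\,v,
\]
so the eigenvalues of $\nabla^2 L(0)$ are exactly the squared DFT magnitudes $|\hat\pi(s)|^2$ indexed by the frequencies $s$ realized in the spectrum of $R$. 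The non-vanishing DFT assumption then delivers $\lambda_{\min}>0$, while the triangle inequality $|\hat\pi(s)|\le\sum_\ell\pi_\ell=1$ gives $\lambda_{\min}\le 1$. With $\nabla L(0)=0$ and $\nabla^2 L(0)\succ 0$ established, the first paragraph's local-strong-convexity-implies-PL implication closes the proof. The main obstacle is thus the spectral identification: it is this step that converts the algebraic structure of the regular-simplex rotation and the DFT non-degeneracy of $\pi$ into the quantitative strong-convexity constant $\lambda_{\min}$.
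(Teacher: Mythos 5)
Your proposal is correct and follows essentially the same route as the paper: compute $\nabla^2 L(0)=I-\frac{\partial M}{\partial\theta}(0)$, use the non-vanishing DFT of $(\pi_j)$ to get positive definiteness, extend by continuity to a ball to obtain $\tfrac{\lambda_{\min}}{2}$-strong convexity, and derive the PL inequality by minimizing the quadratic lower bound. The only cosmetic difference is that you diagonalize $\sum_{j,\ell}\pi_j\pi_\ell R^{\ell-j}$ directly to identify the eigenvalues as $|\hat\pi(s)|^2$, whereas the paper writes the Hessian as $AA^\top$ with $A=\sum_j\pi_j R^{j-1}$ and proves invertibility of $A$ via its Jordan form; both hinge on the same DFT non-degeneracy condition.
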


A key step in establishing the local Polyak--Łojasiewicz (PL) inequality around \(\theta^\ast=0\) is to show that the Hessian \(\nabla^2 L(\theta)\) remains positive definite in a sufficiently small neighborhood of \(\theta^\ast=0\).  Concretely, we need the Jacobian of the EM operator at \(\theta^\ast=0\) to have spectral properties that ensure strong convexity of the population negative log-likelihood \(L\).  

In our setup, this boils down to proving that the matrix \(
A \;=\; \sum_{j=1}^k \pi_j\,R^{j-1}
\) is invertible (cf.\ Lemmas~\ref{lem:M_jacobian} and \ref{lem:FourierInvertibility} in Section~\ref{app:local_pl}), since one can then show
\(
I - \frac{\partial M}{\partial\theta}(0) 
\;=\; 
A^{\!\top} A
\) is positive definite.  The invertibility of \(A\) follows from the assumption that the \(\pi_j\)'s have a discrete Fourier transform with no zero entries.  Intuitively, if the discrete Fourier transform of \(\{\pi_j\}\) vanished at one of the \(k\)-th roots of unity, then certain ``rotational symmetries'' in the update equations would cause degeneracies, preventing \(A\) from being invertible.  By ruling out such degeneracies, the condition \(\widehat{\pi}(\ell)\neq 0\) for all \(\ell\) guarantees the necessary full rank of \(A\).

Under these conditions, the Hessian \(\nabla^2 L(\theta)\) remains uniformly positive definite in a neighborhood of \(\theta^\ast=0\), which yields the strong convexity of \(L\) around \(\theta^\ast=0\).  From strong convexity, the usual argument then gives the local PL inequality \eqref{eq:pl_ineq}
demonstrating the sharpness of the landscape near the stationary point \(\theta^\ast=0\).

Since the local PL inequality plays a central role in our convergence analysis, we present the proof of Lemma~\ref{lem:localPL} in the main text (Section~\ref{app:local_pl}) to ensure the core argument remains transparent. The proofs of the remaining supporting lemmas are deferred to the Appendix.

We are ready to prove the exponential decay of the KL divergence between the true distribution and the sequence of fitted mixtures.

\vspace{10pt}

\begin{proof}[Proof of Theorem~\ref{thm:population}]
We start by noting that
\[
\kl\bigl[\mathcal{N}(0,I)\,\bigl\|\,\mathcal{G}(\theta_t)\bigr]
\;=\;
L(\theta_t)\;-\;L(0).
\]
Equation~\eqref{eq:NLL_M} implies that the Hessian of \(L\) is given by
\[
\nabla^2 L(\theta) \;=\; I \;-\;\frac{\partial M}{\partial \theta}.
\]
Furthermore, we can show (see Lemma~\ref{lem:M_jacobian} in Section~\ref{app:local_pl}) that at \(\theta^* = 0\), we have
\[
\nabla^2 L(0) \;=\; A A^{\!\top},
\quad
\text{where}
\quad
A \;:=\;\sum_{j=1}^k \pi_j \,R^{j-1}.
\]
Since \(R\) is an orthogonal matrix, its eigenvalues are among the \(k\)-th roots of unity 
\(\{e^{2\pi i \ell/k}\}_{\ell=0}^{k-1}\), implying \(\|R\|_{\mathrm{op}} = 1\).  
Hence, by the triangle inequality,
\[
\|\nabla^2 L(0)\|_{\mathrm{op}}
\;\le\;
\biggl(\sum_{j=1}^k \pi_j \|R^{j-1}\|_{\mathrm{op}}\biggr)^2
\;=\;
\biggl(\sum_{j=1}^k \pi_j\biggr)^2
\;=\;
1.
\]
By smoothness of $L(\theta)$, there is therefore a neighborhood of \(\theta^* = 0\) in which 
\(\|\nabla^2 L(\theta)\|_{\mathrm{op}} \le 3/2\). Consequently, for 
\(\theta\) and \(\theta'\) in that neighborhood,
\[
L(\theta') 
\;\le\; 
L(\theta) \;+\;\nabla L(\theta)^\top(\theta' - \theta)
\;+\;\tfrac{3}{4}\,\|\theta' - \theta\|^2.
\]
In particular, applying this to \(\theta_{t+1}\) and \(\theta_t\) yields
\[
\begin{aligned}
L(\theta_{t+1})
&\;\le\;
L(\theta_t) 
\;+\;\nabla L(\theta_t)^\top \bigl(\theta_{t+1} - \theta_t\bigr)
\;+\;\tfrac{3}{4}\,\|\theta_{t+1} - \theta_t\|^2 \\[6pt]
&\;=\;
L(\theta_t) 
\;+\;\nabla L(\theta_t)^\top \bigl(M(\theta_t) - \theta_t\bigr)
\;+\;\tfrac{3}{4}\,\|M(\theta_t) - \theta_t\|^2 \\[6pt]
&\;=\;
L(\theta_t) 
\;-\;\|\nabla L(\theta_t)\|^2
\;+\;\tfrac{3}{4}\,\|\nabla L(\theta_t)\|^2 \\[3pt]
&\;=\;
L(\theta_t)
\;-\;\tfrac{1}{4}\,\|\nabla L(\theta_t)\|^2.
\end{aligned}
\]
Next, using the Polyak--Lojasiewicz inequality~\eqref{eq:pl_ineq}, we obtain
\[
L(\theta_{t+1})  
\;\le\;
L(\theta_t)-\;\tfrac{\lambda_{\min}}{4}\,\bigl(L(\theta_t) - L(0)\bigr).
\]
Subtracting \(L(0)\) from both sides gives
\[
L(\theta_{t+1}) - L(0)
\;\le\;
\Bigl(1 - \tfrac{\lambda_{\min}}{4}\Bigr)\,\bigl(L(\theta_t) - L(0)\bigr).
\]
Applying this inequality recursively completes the proof.
\end{proof}

\section{Finite-Sample Analysis}\label{sec:finite_sample}

When the sample-based averaged log-likelihood in \eqref{eq:mle} is maximized via the EM algorithm, the parameter updates can be expressed explicitly by replacing the expectation \(\mathbb{E}\) in \eqref{eq:theta_upd} with the empirical average over the sample:
\begin{align}
    &\hat{\theta}_{t+1} = M_n(\hat{\theta}_t), \notag \\
    \text{where} \quad &M_n(\theta) := \frac{1}{n} \sum_{i=1}^{n} \sum_{j=1}^{k} w_j(Z_i; \theta) (R^{j-1})^\top Z_i.
\end{align}
The following perturbation bound (Section~\ref{app:perturb_bound}) relates the sample-based EM operator to its population-level counterpart:
\begin{equation}
    \Pr\left[ \sup_{\|\theta\| \le r}
    \left\| M_n(\theta) - M(\theta) \right\|
    \le c r \sqrt{\frac{d + \log(1/\delta)}{n}} \right] \ge 1 - \delta,
\end{equation}
for any radius \(r > 0\), threshold \(\delta \in (0,1)\), and sufficiently large \(n\).

Due to the strict contractivity of the population EM operator in a neighborhood of \(\theta^\ast = 0\) (Lemma~\ref{lem:PositiveDefinite} in Section~\ref{app:local_pl}) and the perturbation bound above, we can establish that the sequence of EM iterates \(\hat{\theta}_t\) satisfies, with probability at least \(1 - \delta\),
\begin{equation}
    \|\hat{\theta}_T\| \precsim \|\theta_0\| \sqrt{\frac{\log(1/\delta)}{n}}, \label{eq:theta_bound}
\end{equation}
for \( T \succsim \log\left(\frac{n}{\log(1/\delta)}\right) \), provided that \(\theta_0\) lies within the contraction neighborhood (see the proof of Theorem~2 in \cite{DBLP:journals/corr/BalakrishnanWY14}).

With this, we are ready to establish our key result on convergence in KL distance for the finite-sample case.

\vspace{10pt}

\begin{proof}[Proof of Theorem~\ref{thm:main}]
    By the convexity of \(L\) in a neighborhood of \(\theta^\ast = 0\) (Lemma~\ref{lem:local-strong-convexity} in Section~\ref{app:local_pl}), we have
    \begin{equation}
        L(\hat{\theta}_t) - L(0) \leq \nabla L(\hat{\theta}_t)^\top \hat{\theta}_t. \label{eq:L_diff}
    \end{equation}
    From \eqref{eq:NLL_M}, it follows that
    \begin{equation}
        \nabla L(\hat{\theta}_t)^\top \hat{\theta}_t = \|\hat{\theta}_t\|^2 - [M(\hat{\theta}_t)]^\top \hat{\theta}_t \precsim \|\hat{\theta}_t\|^2, \label{eq:L_diff_bound}
    \end{equation}
    where we used the contraction property of \(M\) near \(\theta = 0\) (Corollary~\ref{cor:contraction} in Section~\ref{app:local_pl}). 

    The theorem follows directly from \eqref{eq:theta_bound}, \eqref{eq:L_diff}, and \eqref{eq:L_diff_bound}.
\end{proof}

\section{Proof of the Local PL Inequality}\label{app:local_pl}\label{app:proofs}

In this section, we prove the local Polyak--Lojasiewicz (PL) inequality for the negative log-likelihood \(L(\theta)\) of our overspecified Gaussian mixture model. By analyzing the Jacobian of the population EM operator \(M(\theta)\) at \(\theta^* = 0\), we show that \(M(\theta)\) is locally contractive, which implies that the Hessian of \(L(\theta)\) is uniformly positive definite near \(\theta^* = 0\).

The subsequent lemmas establish these properties and lead directly to the local PL inequality, ensuring the exponential convergence of the EM algorithm in terms of the KL divergence.

\begin{lemma}\label{lem:M_jacobian}
    Let $M(\theta)$ be the EM operator defined by \eqref{eq:theta_upd}. Then the Jacobian of $M(\theta)$ at $\theta^\ast=0$ is given by
    \(
  \frac{\partial M}{\partial\theta}(0)
  \;=\;
  I
  \;-\;
  \biggl(\sum_{j=1}^k \pi_j\,R^{j-1}\biggr)\,
  \biggl(\sum_{j=1}^k \pi_j\,R^{j-1}\biggr)^\top.
\)
\end{lemma}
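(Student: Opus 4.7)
The plan is to directly differentiate the EM operator \eqref{eq:theta_upd} under the expectation sign. Because the factor $(R^{j-1})^\top Z$ carries no $\theta$ dependence, only the weights $w_j(Z;\theta)$ need to be differentiated, and the interchange of $\nabla_\theta$ and $\E$ is routine since $w_j\in[0,1]$ and $Z$ has Gaussian moments of all orders. This gives
\[
\frac{\partial M}{\partial\theta}(\theta)=\sum_{j=1}^k \E\!\left[((R^{j-1})^\top Z)\,\bigl(\nabla_\theta w_j(Z;\theta)\bigr)^\top\right],
\]
so the whole task reduces to evaluating the weight gradient at $\theta^\ast=0$ and taking a Gaussian expectation.

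To compute $\nabla_\theta w_j$, I first put the weights in a form convenient for differentiation. Since $R^{j-1}$ is orthogonal, $\|Z-R^{j-1}\theta\|^2=\|Z\|^2-2Z^\top R^{j-1}\theta+\|\theta\|^2$, so the $Z$-only and $\theta$-only Gaussian prefactors cancel in the ratio defining $w_j$, leaving the softmax expression
\[
w_j(Z;\theta)=\frac{\pi_j\,e^{Z^\top R^{j-1}\theta}}{\sum_{\ell=1}^k \pi_\ell\,e^{Z^\top R^{\ell-1}\theta}}.
\]
The log-derivative trick then yields $\nabla_\theta\log w_j(Z;\theta)=(R^{j-1})^\top Z-\sum_\ell w_\ell(Z;\theta)\,(R^{\ell-1})^\top Z$. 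At $\theta=0$ all component means coincide at the origin so $w_\ell(Z;0)=\pi_\ell$, giving $\nabla_\theta w_j(Z;0)=\pi_j\bigl[(R^{j-1})^\top Z-A^\top Z\bigr]$, where $A:=\sum_\ell \pi_\ell R^{\ell-1}$.

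Plugging this into the Jacobian formula and using $\E[ZZ^\top]=I$ together with $(R^{j-1})^\top R^{j-1}=I$,
\[
\frac{\partial M}{\partial\theta}(0)=\sum_j\pi_j\,(R^{j-1})^\top R^{j-1}-\sum_j\pi_j\,(R^{j-1})^\top A=I-A^\top A.
\]
To match the statement of the lemma, I use the identity $AA^\top=\sum_{j,k}\pi_j\pi_k R^{j-k}=A^\top A$, which follows by swapping the indices $j\leftrightarrow k$ in the double sum; hence $I-A^\top A=I-AA^\top$, as claimed.

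I do not anticipate a serious obstacle here. The two minor subtleties are (i) noticing the orthogonality-driven cancellation that turns $w_j$ into a clean softmax, without which the log-derivative would be clumsier, and (ii) keeping the transposes in the right order so that the symmetric identity $AA^\top=A^\top A$ reconciles the $A^\top A$ that emerges naturally from the computation with the $AA^\top$ form appearing in the lemma.
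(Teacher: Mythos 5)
Your proof is correct and follows essentially the same route as the paper's: rewrite the weights as a softmax, use $w_j(Z;0)=\pi_j$, differentiate the weights at the origin, and take the Gaussian expectation via $\E[ZZ^\top]=I$ and orthogonality of $R$. The one point you handle more carefully than the paper is the reconciliation $A^\top A = AA^\top$ (valid because $R$ is orthogonal, hence $A$ is normal), a step the paper silently absorbs into ``collecting terms.''
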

\begin{proof}
Let \(
S(\theta,Z)
\;=\;
\sum_{\ell=1}^k
\pi_\ell\,\exp\!\bigl((R^{\ell-1}\theta)^\top Z\bigr)
\). Then
\[
w_j(Z;\theta)
=
\frac{\pi_j\,\exp\!\bigl((R^{j-1}\theta)^\top Z\bigr)}{S(\theta,Z)}.\] At \(\theta^\ast=0\), each exponential term is \(\exp(0)=1\), so
\(
S(0,Z)
\;=\;
\sum_{\ell=1}^k \pi_\ell
\;=\;1\), \(w_j(Z;0)
\;=\;\pi_j\). Since
\begin{align*}
\nabla_\theta\,\bigl((R^{j-1}\theta)^\top Z\bigr)
\;&=\;
(R^{j-1})^\top Z,\\
\nabla_\theta S(\theta,Z)
\;&=\;
\sum_{\ell=1}^k
\pi_\ell\,\exp\!\bigl((R^{\ell-1}\theta)^\top Z\bigr)\,\bigl[(R^{\ell-1})^\top Z\bigr],
\end{align*}
using the quotient rule for \(\nabla_\theta w_j(Z;\theta)\), and then evaluating it at \(\theta^\ast=0\), we get
\[
\bigl.\nabla_\theta w_j(Z;\theta)\bigr|_{\theta^\ast=0}
\;=\;
\pi_j
\Bigl[
  (R^{j-1})^\top Z
  \;-\;
  \sum_{\ell=1}^k \pi_\ell\,(R^{\ell-1})^\top Z
\Bigr].
\]
Define
\(
g(\theta,Z)
\;=\;
\sum_{j=1}^k w_j(Z;\theta)\,(R^{j-1})^\top Z
\). Then
\[
\frac{\partial g}{\partial\theta}(\theta,Z)
\;=\;
\sum_{j=1}^k (R^{j-1})^\top Z
\,\bigl[\nabla_\theta w_j(Z;\theta)\bigr]^\top.\] At \(\theta^\ast=0\),
\[
\bigl.\frac{\partial g}{\partial\theta} (\theta,Z)\bigr|_{\theta^\ast=0}
\;=\;
\sum_{j=1}^k 
\pi_j
(R^{j-1})^\top Z\,\biggl[
  (R^{j-1})^\top Z
  \;-\;
  \sum_{\ell=1}^k \pi_\ell\,(R^{\ell-1})^\top Z
\biggr]^\top.
\]
Then the sought Jacobian of \(M(\theta)\) at \(\theta^\ast=0\) is
\[
\frac{\partial M}{\partial\theta} (0)
\;=\;
\mathbb{E}_Z\bigl[\nabla_\theta g(0,Z)\bigr].
\]
Since \(Z\sim \mathcal{N}_d(0,I)\), we have \(\mathbb{E}[Z Z^\top] = I\). Each \(R^{j-1}\) is orthogonal, hence
\[
\mathbb{E}\!\Bigl[(R^{j-1})^\top Z\,Z^\top\,R^{j-1}\Bigr]
\;=\;
(R^{j-1})^\top\,I\,R^{j-1}
\;=\;
I.
\]
Collecting terms, the result is 
\[
\frac{\partial M}{\partial\theta}(0)
\;=\;
I
\;-\;
\biggl(\sum_{j=1}^k \pi_j\,R^{j-1}\biggr)
\biggl(\sum_{j=1}^k \pi_j\,R^{j-1}\biggr)^\top,
\]
which completes the proof.
\end{proof}

\begin{lemma}\label{lem:FourierInvertibility}
Let  
$R \in \mathbb{R}^{d \times d}$ be a (real) matrix whose eigenvalues lie among 
the $k$-th roots of unity (except $1$), i.e.,
\(
\mathrm{spec}(R)
\;\subseteq\;
\bigl\{
e^{\,i \frac{2\pi \ell}{k}} : \ell = 1, 2, \ldots, k-1
\bigr\}.
\)
Let $\pi_1, \ldots, \pi_k > 0$ be positive real numbers whose discrete Fourier transform
\(
\widehat{\pi}(\ell) 
\;=\; 
\sum_{j=0}^{k-1} 
\pi_{j+1}
\,e^{\,i \tfrac{2\pi \ell}{k}j}\), 
\(\ell = 0, 1, \dots, k-1,
\)
has no zero entries (i.e.\ $\widehat{\pi}(\ell) \neq 0$ for all $\ell$).
Define the matrix
\(
A 
\;:=\; 
\sum_{j=1}^k 
\pi_j \, R^{\,j-1}.
\)
Then $A$ is invertible.
\end{lemma}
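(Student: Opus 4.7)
The plan is to recognize $A$ as the polynomial $p(R)$, where $p(x) := \sum_{j=1}^{k} \pi_j x^{j-1}$, and then invoke the spectral mapping theorem to relate $\det(A)$ to values of $p$ at the eigenvalues of $R$. First I would recall the standard fact that for any square matrix $R$ and any polynomial $p$, the eigenvalues of $p(R)$ (counted with algebraic multiplicity) are exactly $\{p(\lambda) : \lambda \in \mathrm{spec}(R)\}$. To avoid assuming diagonalizability of $R$, I would argue via a Schur decomposition $R = UTU^{\ast}$ with $T$ upper triangular; then $p(R) = U\,p(T)\,U^{\ast}$ and $p(T)$ is upper triangular with diagonal entries $p(\lambda_1),\dots,p(\lambda_d)$. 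Consequently $\det(A) = \prod_i p(\lambda_i)$, so it suffices to show $p(\lambda) \neq 0$ for every eigenvalue $\lambda$ of $R$.

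The second step is an explicit evaluation of $p$ at the $k$-th roots of unity. Setting $\omega_\ell := e^{2\pi i \ell/k}$, a straightforward reindexing gives
\[
p(\omega_\ell) \;=\; \sum_{j=1}^{k} \pi_j\, \omega_\ell^{\,j-1} \;=\; \sum_{j=0}^{k-1} \pi_{j+1}\, e^{\,i\,2\pi \ell j/k} \;=\; \widehat{\pi}(\ell).
\]
By hypothesis, $\mathrm{spec}(R) \subseteq \{\omega_1,\dots,\omega_{k-1}\}$, and the DFT assumption ensures $\widehat{\pi}(\ell) \neq 0$ for every $\ell \in \{0,1,\dots,k-1\}$, in particular for each $\ell$ that actually appears as an eigenvalue of $R$. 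Combining the two steps yields $\det(A) = \prod_i \widehat{\pi}(\ell_i) \neq 0$, so $A$ is invertible.

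There is no substantial obstacle; the lemma is essentially the observation that the coefficients of $A$ in the basis $\{I,R,R^{2},\dots,R^{k-1}\}$ are the sequence $(\pi_1,\dots,\pi_k)$, and the eigenvalues of $A$ are obtained by applying the DFT of this sequence to the spectrum of $R$. The only care worth noting is to handle possible non-diagonalizability of $R$ via Schur triangularization rather than via an eigenbasis. As a consistency check, $\widehat{\pi}(0) = \sum_j \pi_j = 1 \neq 0$ holds automatically, which is why excluding the eigenvalue $1$ from $\mathrm{spec}(R)$ costs us nothing: the only potentially dangerous frequencies are $\ell = 1,\dots,k-1$, and these are precisely the ones the DFT hypothesis rules out.
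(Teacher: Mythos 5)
Your proof is correct and follows essentially the same route as the paper's: both identify $A$ as the polynomial $p(R)$ with $p(x)=\sum_{j=1}^k\pi_j x^{j-1}$, triangularize $R$ (you via Schur, the paper via Jordan form), and observe that the resulting diagonal entries $p(\lambda)$ are exactly the DFT values $\widehat{\pi}(\ell)$, which are nonzero by hypothesis. The choice of Schur versus Jordan decomposition is immaterial here; both handle possible non-diagonalizability equally well.
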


\begin{proof}
Since all eigenvalues of $R$ lie among the $k$-th roots of unity (except  $1$), 
we can work over $\mathbb{C}$ and bring $R$ into a Jordan (or block-diagonal) form. 
Concretely, there exists an invertible matrix $V \in \mathbb{C}^{d \times d}$ such that \(
R 
\;=\; 
V\,\Lambda\,V^{-1},
\)
where $\Lambda$ is block-diagonal and each block corresponds to an eigenvalue 
of the form $e^{\,i \tfrac{2\pi \ell}{k}}$ (with $1 \le \ell \le k-1$).  In particular,
\(
R^{\,j-1}
\;=\;
V \,\Lambda^{\,j-1}\,V^{-1}\) for all 
\(
j=1,\dots,k
\). Thus we can rewrite $A$ as
\[
A 
\;=\; 
\sum_{j=1}^k 
\pi_j \, R^{\,j-1}
\;=\;
\sum_{j=1}^k 
\pi_j \bigl(V\,\Lambda^{\,j-1}\,V^{-1}\bigr)
\;=\;
V
\Bigl(
\sum_{j=1}^k 
\pi_j \,\Lambda^{\,j-1}
\Bigr)
V^{-1}.
\]
Since $V$ is invertible, $A$ is invertible if and only if 
\(
\sum_{j=1}^k \pi_j \,\Lambda^{\,j-1}
\)
is invertible.

Now, $\Lambda$ is block-diagonal with Jordan blocks corresponding to eigenvalues 
$\lambda \in \{\,e^{\,i\tfrac{2\pi \ell}{k}} : \ell=1,\dots,k-1\}\,$. 
Consider a single eigenvalue $\lambda$.  The diagonal entry of the diagonal block of 
\(\sum_{j=1}^k \pi_j \,\Lambda^{\,j-1}\)
is 
\(
\sum_{j=1}^k 
\pi_j\,\lambda^{\,j-1}
\;=\;
\sum_{j=0}^{k-1}
\pi_{j+1}\,\lambda^{\,j}.
\)
Since $\lambda^{\,j} = e^{\,i \tfrac{2\pi \ell}{k} j}$ for some $\ell \in \{1,\dots,k-1\}$, 
this sum is precisely the  discrete Fourier transform of $(\pi_1,\dots,\pi_k)$:
\(
\sum_{j=0}^{k-1}
\pi_{j+1}\,e^{\,i \tfrac{2\pi \ell}{k} j}
\;=\;
\widehat{\pi}(\ell).
\)
By hypothesis, $\widehat{\pi}(\ell) \neq 0$ for all $\ell=0,\dots,k-1$, hence each diagonal 
entry is nonzero. Therefore, every diagonal block of 
\(\displaystyle \sum_{j=1}^k \pi_j\,\Lambda^{\,j-1}\)
is invertible, so the entire block-diagonal matrix is invertible.  
\end{proof}

\begin{lemma}\label{lem:PositiveDefinite}
Under the conditions of Lemmas \ref{lem:M_jacobian} and \ref{lem:FourierInvertibility}, the matrix 
\(
I - \frac{\partial M}{\partial\theta}(0)
\)
is positive definite.
\end{lemma}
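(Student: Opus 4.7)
The plan is to combine the two preceding lemmas in a single short step. Lemma~\ref{lem:M_jacobian} identifies
\[
I - \frac{\partial M}{\partial\theta}(0) \;=\; AA^\top, \qquad A := \sum_{j=1}^k \pi_j\,R^{j-1},
\]
so the entire question reduces to showing that the real $d\times d$ Gram matrix $AA^\top$ is positive definite. The textbook fact here is that $AA^\top\succ 0$ iff $A$ has full (row) rank, i.e.\ iff $A$ is invertible — and invertibility is exactly what Lemma~\ref{lem:FourierInvertibility} delivers under the Fourier non-vanishing assumption on $(\pi_1,\ldots,\pi_k)$ and the spectral assumption on $R$.

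Concretely, I would proceed in three lines. First, substitute the Jacobian formula from Lemma~\ref{lem:M_jacobian} to rewrite $I-\partial M/\partial\theta(0)$ as $AA^\top$. Second, verify that the hypotheses of Lemma~\ref{lem:FourierInvertibility} are met in our setting: $R$ is orthogonal and cyclically permutes the $k$ simplex vertices $R^{j-1}\theta$, so its restriction to the $(k{-}1)$-dimensional simplex subspace has eigenvalues among $\{e^{2\pi i\ell/k}\}_{\ell=1}^{k-1}$ (the value $1$ is excluded on that subspace because the vertices sum to zero); the remaining orthogonal complement can be handled by noting that on it $R$ acts trivially only if we are willing to restrict $A$ to the simplex subspace, or alternatively by working from the outset in the relevant invariant subspace on which the EM dynamics live. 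The Fourier hypothesis on $(\pi_1,\ldots,\pi_k)$ is built into the statement of Theorem~\ref{thm:population}. Hence Lemma~\ref{lem:FourierInvertibility} gives $A$ invertible. Third, for any nonzero $x\in\mathbb{R}^d$ compute
\[
x^\top AA^\top x \;=\; \|A^\top x\|^2,
\]
and use invertibility of $A^\top$ to conclude $A^\top x\neq 0$, whence $x^\top AA^\top x>0$. This gives positive definiteness.

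The only real obstacle is bookkeeping rather than mathematical: one must make sure that the eigenvalue hypothesis of Lemma~\ref{lem:FourierInvertibility} (no eigenvalue equal to $1$) genuinely holds for the orthogonal matrix $R$ used in Theorem~\ref{thm:population}. Once one restricts attention to the $(k{-}1)$-dimensional subspace spanned by the simplex (which is the subspace in which $\theta$ effectively lives, since the EM dynamics are confined there), this is automatic from $\sum_{j=1}^k R^{j-1}\theta=0$. Everything else is a direct packaging of Lemmas~\ref{lem:M_jacobian} and \ref{lem:FourierInvertibility} with the elementary equivalence between invertibility of $A$ and positive definiteness of $AA^\top$; no further computation is required.
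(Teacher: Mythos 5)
Your proposal is correct and follows essentially the same route as the paper: substitute the Jacobian formula from Lemma~\ref{lem:M_jacobian} to get $I-\frac{\partial M}{\partial\theta}(0)=AA^{\!\top}$, invoke Lemma~\ref{lem:FourierInvertibility} for invertibility of $A$, and conclude via $x^{\!\top}AA^{\!\top}x=\|A^{\!\top}x\|^2>0$. Your extra care in checking the spectral hypothesis on $R$ (the exclusion of eigenvalue $1$, handled by restricting to the simplex subspace) is a reasonable piece of bookkeeping that the paper's proof simply takes for granted, but it does not change the argument.
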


\begin{proof}
From Lemma \ref{lem:M_jacobian}, the Jacobian of the EM operator at $\theta^\ast=0$ is given by
\(
\frac{\partial M}{\partial\theta}(0) = I - A A^{\!\top}\), 
where 
\(
A = \sum_{j=1}^k \pi_j R^{j-1}\). 
Rearranging this equation, we obtain
\(
I - \frac{\partial M}{\partial\theta}(0) = A A^{\!\top}\). By Lemma \ref{lem:FourierInvertibility}, the matrix \( A \) is invertible. Since \( A A^{\!\top} \) is the product of \( A \) and \( A^{\!\top} \), it follows that \( A A^{\!\top} \) is symmetric and positive definite. To see this, note that for any non-zero vector \( x \in \mathbb{R}^d \),
\(
x^{\!\top} A A^{\!\top} x = \left( A^{\!\top} x \right)^{\!\top} \left( A^{\!\top} x \right) = \| A^{\!\top} x \|^2 > 0\), 
since \( A \) is invertible and thus \( A^{\!\top} x \neq 0 \) for \( x \neq 0 \). 
Therefore, the matrix 
\(
I - \frac{\partial M}{\partial\theta}(0) = A A^{\!\top}
\)
is positive definite.
\end{proof}
\begin{corollary}\label{cor:contraction}
    All eigenvalues of $\frac{\partial M}{\partial\theta}(0)$ lie strictly below 1, which in turn implies that $M$ is a contraction near $\theta^\ast=0$.
\end{corollary}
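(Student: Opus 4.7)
The plan is to leverage Lemmas~\ref{lem:M_jacobian} and~\ref{lem:PositiveDefinite} together with the orthogonality of $R$ to pin down the full spectrum of $\frac{\partial M}{\partial\theta}(0)$, and then to upgrade this spectral information to a local Lipschitz bound on $M$ via continuity of the Jacobian. The key observation is that $\frac{\partial M}{\partial\theta}(0) = I - AA^\top$ with $A := \sum_{j=1}^k \pi_j R^{j-1}$, so that both the symmetric structure and the positive definiteness of $AA^\top$ do the work.

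For the eigenvalue claim I would argue as follows. By Lemma~\ref{lem:PositiveDefinite}, $AA^\top$ is symmetric positive definite, so each of its eigenvalues $\lambda_i$ is strictly positive; consequently each eigenvalue $1 - \lambda_i$ of $\frac{\partial M}{\partial\theta}(0)$ is strictly less than $1$, which is exactly the first assertion. For the contraction statement a matching upper bound on $\lambda_i$ is also needed. Since $R$ is orthogonal, $\|R^{j-1}\|_{\mathrm{op}} = 1$, so by the triangle inequality $\|A\|_{\mathrm{op}} \le \sum_{j=1}^k \pi_j = 1$ and therefore $\lambda_i \le \|AA^\top\|_{\mathrm{op}} = \|A\|_{\mathrm{op}}^2 \le 1$. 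Combining these bounds, every eigenvalue of $\frac{\partial M}{\partial\theta}(0)$ lies in $[0,1)$, and since that matrix is symmetric its operator norm coincides with $\max_i |1 - \lambda_i| < 1$.

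Finally, because $M$ is smooth in $\theta$ (the weights $w_j(Z;\theta)$ are smooth in $\theta$ and the integrand has well-behaved tails), the map $\theta \mapsto \frac{\partial M}{\partial\theta}(\theta)$ is continuous, so there exists a neighborhood $U$ of $\theta^\ast = 0$ and a constant $\kappa \in (0,1)$ with $\bigl\|\tfrac{\partial M}{\partial\theta}(\theta)\bigr\|_{\mathrm{op}} \le \kappa$ throughout $U$. A standard mean value estimate along line segments inside $U$ then gives $\|M(\theta) - M(\theta')\| \le \kappa \|\theta - \theta'\|$, yielding the claimed local contraction. I do not anticipate any real obstacle here; the one mildly delicate point is remembering that the upper bound $\|A\|_{\mathrm{op}} \le 1$ is needed to rule out eigenvalues of $\frac{\partial M}{\partial\theta}(0)$ at or below $-1$, which would otherwise permit an expansive direction in operator norm even though every eigenvalue is strictly less than $1$.
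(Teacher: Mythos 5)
Your proof is correct and follows the same route the paper intends: the corollary is stated without proof as an immediate consequence of Lemma~\ref{lem:PositiveDefinite} (positive definiteness of $AA^\top$ gives eigenvalues of $I - AA^\top$ strictly below $1$), combined with continuity of the Jacobian to get a local contraction. Your added observation that one also needs the upper bound $\|A\|_{\mathrm{op}}\le 1$ (so the spectrum of $\frac{\partial M}{\partial\theta}(0)$ lies in $[0,1)$ and the symmetric matrix has operator norm strictly below $1$) is a genuine and worthwhile point that the corollary's phrasing glosses over; the paper establishes exactly this bound via the triangle inequality in the proof of Theorem~\ref{thm:population}, so your writeup simply makes explicit what the paper leaves implicit.
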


\begin{lemma}\label{lem:local-strong-convexity} Let $\mathcal{L}(\theta)$ be the population log-likelihood function defined by \eqref{eq:e_log_l}. Suppose $\pi_1, \ldots, \pi_k > 0$ are positive real numbers whose discrete Fourier transform has no zero entries. Then there exists \(\delta>0\) such that \(L\) is strongly convex in \(\{\theta:\|\theta\|\le \delta\}\):
\end{lemma}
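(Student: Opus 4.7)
The plan is to obtain local strong convexity by combining the Hessian identity induced by \eqref{eq:NLL_M} with the positive definiteness at the origin established in Lemma~\ref{lem:PositiveDefinite}, and then invoking continuity of the Hessian as a function of~\(\theta\).

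First, differentiating \eqref{eq:NLL_M} gives
\[
\nabla^2 L(\theta) \;=\; I \;-\;\frac{\partial M}{\partial\theta}(\theta),
\]
and evaluating at \(\theta^\ast=0\) together with Lemma~\ref{lem:PositiveDefinite} yields
\(\nabla^2 L(0) = A A^{\!\top}\), which is symmetric and positive definite under the Fourier non-vanishing assumption on \((\pi_1,\dots,\pi_k)\). Let \(\lambda_{\min}>0\) denote its smallest eigenvalue.

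Next, I would argue that \(\theta \mapsto \nabla^2 L(\theta)\) is continuous on a neighborhood of the origin. The operator \(M(\theta)\) is defined in \eqref{eq:theta_upd} as an expectation under \(\mathcal{N}(0,I)\) of an expression involving the soft-assignment weights \(w_j(Z;\theta)\). These weights are smooth in \(\theta\) for each fixed \(Z\), and each weight lies in \([0,1]\); moreover, their partial derivatives can be bounded by polynomials in \(\|Z\|\) with Gaussian-integrable envelopes, so standard dominated-convergence arguments permit differentiation under the expectation. Hence \(\frac{\partial M}{\partial \theta}\) is continuous in \(\theta\), and so is \(\nabla^2 L\).

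Finally, since the eigenvalues of a symmetric matrix depend continuously on its entries, there exists \(\delta>0\) such that for every \(\theta\) with \(\|\theta\|\le\delta\), the smallest eigenvalue of \(\nabla^2 L(\theta)\) is at least \(\lambda_{\min}/2>0\). This gives the desired local strong convexity of \(L\) on \(\{\theta:\|\theta\|\le\delta\}\), with strong-convexity constant at least \(\lambda_{\min}/2\). The only nontrivial step is the smoothness of \(M\), but this reduces to a routine dominated-convergence justification for differentiating a softmax-weighted Gaussian expectation and is not the conceptual heart of the argument; the conceptual content is entirely supplied by Lemmas~\ref{lem:M_jacobian}, \ref{lem:FourierInvertibility}, and \ref{lem:PositiveDefinite}.
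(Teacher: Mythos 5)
Your proposal is correct and follows essentially the same route as the paper's proof: both derive $\nabla^2 L(\theta) = I - \frac{\partial M}{\partial\theta}(\theta)$, invoke Lemma~\ref{lem:PositiveDefinite} for positive definiteness at the origin, and use continuity of the Hessian to push the eigenvalue lower bound $\lambda_{\min}/2$ to a neighborhood. The only cosmetic difference is that you appeal to continuity of eigenvalues while the paper bounds the quadratic form via an operator-norm perturbation; these are equivalent, and your added remarks on dominated convergence merely flesh out the continuity step the paper asserts without proof.
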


\begin{proof}
Since $\nabla L(\theta) = \theta - M(\theta)$, its Hessian is given by
\(
\nabla^2 L(\theta)
\;=\;
I \;-\; \frac{\partial M}{\partial \theta}(\theta)\). 
By Lemma~\ref{lem:PositiveDefinite},
\(
S 
\;:=\;
I \;-\;
\frac{\partial M}{\partial \theta}(0)
\)
is positive definite.  Let 
\(
\lambda_{\min} 
\;=\; 
\lambda_{\min}(S)
\;>\;
0
\) 
be the smallest eigenvalue of $S$.  

Next, by continuity of $\tfrac{\partial M}{\partial \theta}(\theta)$ at $\theta = 0$, for any $\varepsilon > 0$ there exists a $\delta>0$ such that
\[
\|\theta\| < \delta
\quad\Longrightarrow\quad
\left\|
  \frac{\partial M}{\partial \theta}(\theta)
  \;-\;
  \frac{\partial M}{\partial \theta}(0)
\right\|_{\mathrm{op}}
< \varepsilon,
\]
where $\|\cdot\|_{\mathrm{op}}$ denotes the operator norm.  

Choose 
\(
\varepsilon \;=\; \tfrac{1}{2}\,\lambda_{\min}\). Then for $\|\theta\| < \delta$,
\(
\Bigl\|
  \frac{\partial M}{\partial \theta}(\theta)
  \;-\;
  \frac{\partial M}{\partial \theta}(0)
\Bigr\|_{\mathrm{op}}
< \frac{1}{2}\,\lambda_{\min}\). 
Thus, for any vector $v \in \mathbb{R}^d$ with $\|v\|=1$,
\[
\begin{aligned}
v^\top \Bigl(I - \frac{\partial M}{\partial \theta}(\theta)\Bigr) v
&=\;
v^\top \bigl(I - \tfrac{\partial M}{\partial \theta}(0)\bigr)v
\;-\;
v^\top \Bigl(\tfrac{\partial M}{\partial \theta}(\theta)
            - 
            \tfrac{\partial M}{\partial \theta}(0)\Bigr)v
\\[6pt]
&\ge\;
v^\top \Bigl(I - \tfrac{\partial M}{\partial \theta}(0)\Bigr)v
\;-\;
\Bigl\|\tfrac{\partial M}{\partial \theta}(\theta)
       - 
       \tfrac{\partial M}{\partial \theta}(0)\Bigr\|_{\mathrm{op}}
\\[6pt]
&\ge\;
\lambda_{\min}
\;-\;
\tfrac12\,\lambda_{\min}
\;=\;
\tfrac12\,\lambda_{\min}.
\end{aligned}
\]
Therefore,
\(
I \;-\;
\frac{\partial M}{\partial \theta}(\theta)
\;\succeq\;
\frac12\,\lambda_{\min} \, I
\quad
\text{for all } \|\theta\|<\delta.
\)
Since $\nabla^2 L(\theta) = I - \frac{\partial M}{\partial \theta}(\theta)$, we deduce
\(
\nabla^2 L(\theta)
\;\succeq\;
\frac12\,\lambda_{\min}\,I
\quad
\text{whenever }
\|\theta\|<\delta.
\)
Hence $L(\theta)$ is $\bigl(\tfrac12\,\lambda_{\min}\bigr)$-strongly convex in the ball 
\(\{\theta : \|\theta\| < \delta\}\). 
\end{proof}

\begin{proof}[Proof of Lemma~\ref{lem:localPL}]
By Lemma~\ref{lem:local-strong-convexity}, $L$ is $\frac{\lambda_{\min}}2$-strongly convex in a neighborhood of $\theta^\ast=0$, i.e. there exists $\delta>0$ such that for $\theta,\theta'\in\{\theta:\|\theta\|\le\delta\}$

$$
L(\theta')\ge L(\theta)+\nabla L(\theta)^T(\theta'-\theta)+\frac{\lambda_{\min}}{4}\| \theta'-\theta \|^2.
$$
Minimizing both sides with respect to $\theta'$, we get
$$
L(0)\ge L(\theta) - \frac{1}{\lambda_{\min}}\|\nabla L(\theta)\|^2.
$$
Re-arranging the terms we have the PL inequality.
\end{proof}

\begin{credits}
\subsubsection{\ackname} This research has been funded by the Science Committee of the Ministry of Science and Higher Education of the Republic of Kazakhstan (Grant No. AP27510283). Artur Pak's work was supported by Nazarbayev University under Faculty-development competitive research grants program for 2023-2025 Grant \#20122022FD4131, PI R. Takhanov. Zhenisbek Assylbekov's work was supported by Purdue University Fort Wayne under Summer Research Grant Program 2024, and he would like to thank Igor Melnykov and Francesco Sica for useful discussions.

\subsubsection{\discintname}
The authors have no competing interests to declare that are
relevant to the content of this article.
\end{credits}

\bibliographystyle{splncs04nat}
\bibliography{ref}

\appendix

\section{Proof of Proposition~\ref{prop:kmeans}}\label{app:kmeans_proof}

\begin{proof}
We prove the proposition in several steps.

\medskip

\paragraph{Step 1. Independence of the Voronoi Partition from $r$.} 
For any two indices $i\neq j$, consider the condition that a point $x\in\mathbb{R}^d$ is closer to $\mu_i$ than to $\mu_j$. We have
\[
\|x-\mu_i\|^2\le \|x-\mu_j\|^2.
\]
Expanding, we obtain
\[
\|x\|^2 - 2r\,x\cdot v_i + r^2\le \|x\|^2 - 2r\,x\cdot v_j + r^2,
\]
which simplifies to
\[
x\cdot (v_i-v_j)\ge 0.
\]
Since this inequality depends only on $x$ and the fixed unit vectors $v_i$, the Voronoi cell
\begin{equation}
V_i=\{\,x\in\mathbb{R}^d: \, x\cdot (v_i-v_j)\ge 0\text{ for all }j\neq i\,\}\label{eq:voronoi}
\end{equation}
is independent of $r$.

\medskip

\paragraph{Step 2. Expression for the Lloyd Update.} 
Write any $x\in\mathbb{R}^d$ in spherical coordinates:
\[
x=\rho\,u,\quad \rho\ge0,\quad u\in S^{d-1},
\]
with the volume element $dx=\rho^{\,d-1}d\rho\,d\sigma(u)$, where $d\sigma(u)$ is the surface measure on the unit sphere $S^{d-1}$. Then
\[
\phi(x)=\frac{1}{(2\pi)^{d/2}} e^{-\rho^2/2}.
\]
By \eqref{eq:voronoi}, there exists a subset $U_i\subset S^{d-1}$ such that
\[
V_i=\{\rho\,u: \, \rho\ge0,\ u\in U_i\,\}.
\]
Then the denominator in the update for $\mu_i$ is
\[
\int_{V_i}\phi(x)\,dx = \int_{U_i}\!d\sigma(u) \int_{0}^{\infty} \rho^{\,d-1} \frac{1}{(2\pi)^{d/2}}e^{-\rho^2/2}\,d\rho,
\]
and the numerator is
\[
\int_{V_i} x\,\phi(x)\,dx = \int_{U_i} u\,d\sigma(u) \int_{0}^{\infty} \rho^{\,d} \frac{1}{(2\pi)^{d/2}}e^{-\rho^2/2}\,d\rho.
\]
Define the radial factor
\[
R_0:=\frac{\displaystyle\int_{0}^{\infty} \rho^{\,d} e^{-\rho^2/2}\,d\rho}{\displaystyle\int_{0}^{\infty} \rho^{\,d-1} e^{-\rho^2/2}\,d\rho}.
\]
Then the Lloyd update yields
\[
\mu_i' = R_0\, \frac{\displaystyle\int_{U_i} u\,d\sigma(u)}{\displaystyle\int_{U_i} d\sigma(u)}.
\]

\medskip

\paragraph{Step 3. Symmetry of the Angular Integration.} 
Since the vectors $v_1,\dots,v_k$ form a regular simplex in some $(k-1)$--dimensional subspace, the average
\[
\frac{\displaystyle\int_{U_i} u\,d\sigma(u)}{\displaystyle\int_{U_i} d\sigma(u)}
\]
is exactly $v_i$ by symmetry. Consequently,
\[
\mu_i' = R_0\, v_i.
\]

\medskip

\paragraph{Step 4. Fixed-Point Condition.} 
Our original centers were $\mu_i=r\,v_i$. For the configuration to be fixed under the Lloyd update, we require
\[
\mu_i'=\mu_i,\quad \text{for all } i,
\]
which is equivalent to
\[
R_0\,v_i = r\,v_i,\quad \text{for all } i.
\]
Since the $v_i$ are nonzero, this reduces to
\[
r=R_0.
\]
Notice that $R_0>0$ depends only on the dimension $d$ (through the radial integrals) and is independent of the particular value of $r$. Thus, there exists a unique $r>0$ (namely, $r=R_0$) for which the centers
\[
\mu_i=r\,v_i,\quad i=1,\dots,k,
\]
are fixed by the Lloyd update.

\end{proof}

\section{Population EM properties}

\subsection{Population EM updates}\label{app:EM_upd_proof}

We begin by providing additional details on the EM algorithm. It is convenient to represent the mixture distribution \eqref{eq:gmm} using a latent categorical random variable \( K \), which identifies the mixture components. Given the mixture weights \( (\pi_1,\ldots,\pi_k) \), we assume that 
\[
\Pr[K=j]=\pi_j.
\]
The conditional distribution of \( X \) given \( K=j \) is then defined as
\[
(X \mid K=j) \sim \mathcal{N}_d(R^{j-1}\theta, I), \quad \text{for } j \in [k].
\]
This specifies the joint distribution of the tuple \( (X, K) \), ensuring that the marginal distribution of \( X \) corresponds to the Gaussian mixture \( \mathcal{G}(\theta) \) in \eqref{eq:gmm}. The Population EM algorithm maximizes the expected log-likelihood \eqref{eq:e_log_l} through the following iterative steps:

\begin{itemize}
    \item \emph{E-step:} Given the current estimate \( \theta_t \), compute the soft assignment of any \( x \in \mathbb{R}^d \) to component \( K=j \), i.e., evaluate the posterior probability:
    \begin{equation}
    w_j(x;\theta_t)=\frac{\pi_j\cdot\phi(x-R^{j-1}\theta_t)}{\sum_{\ell=1}^k\pi_\ell\cdot\phi(x-R^{\ell-1}\theta_t)},\label{eq:weight_fn}  
    \end{equation}
    and use it to compute the \( Q \)-function:
    \begin{equation*}
        Q(\theta,\theta_t)
        :=\mathbb{E}_{Z\sim\mathcal{N}(0,I)}\left[\sum_{j=1}^k w_j(Z;\theta_t)\log \left(\pi_j\cdot \phi(Z-R^{j-1}\theta)\right)\right].
    \end{equation*}
   
    \item \emph{M-step:} Update the parameter by solving the following optimization problem:
    \begin{equation}
        \theta_{t+1} \in \arg\max_{\theta} Q(\theta,\theta_t).\label{eq:m_step}
    \end{equation}
\end{itemize}

\begin{lemma} \label{lem:EM_upd}
    Let the population EM algorithm maximize the expected log-likelihood \eqref{eq:e_log_l}. Then, the parameter updates follow the recursion \( \theta_{t+1} = M(\theta_t) \), where
    $$
        M(\theta):=\E_{Z\sim\mathcal{N}(0,I)}\left[\frac{\sum_{j=1}^k\pi_j\cdot\exp\left((R^{j-1}\theta)^\top Z\right) (R^{j-1})^\top Z}{\sum_{\ell=1}^k \pi_\ell\cdot\exp\left((R^{\ell-1}\theta)^\top Z\right)}\right].
    $$
\end{lemma}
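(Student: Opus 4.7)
The plan is a direct computation of the M-step: substitute the Gaussian log-density into the $Q$-function, take the gradient in $\theta$, and solve the first-order condition, exploiting the orthogonality of $R$ to keep the quadratic term in $\theta$ independent of $j$.

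First, I would expand
\[
\log\bigl(\pi_j\cdot\phi(Z-R^{j-1}\theta)\bigr)
\;=\;
\log\pi_j \;-\;\tfrac{d}{2}\log(2\pi)\;-\;\tfrac{1}{2}\|Z\|^2
\;+\;Z^\top R^{j-1}\theta
\;-\;\tfrac{1}{2}\|R^{j-1}\theta\|^2.
\]
The key observation is that since $R$ is orthogonal, $\|R^{j-1}\theta\|^2 = \|\theta\|^2$ for every $j$, so this quadratic term factors out of the sum over $j$. Plugging into the definition of $Q(\theta,\theta_t)$ and dropping all terms that do not involve $\theta$, the $\theta$-dependent part of $Q$ reduces to
\[
\mathbb{E}_Z\!\left[\sum_{j=1}^k w_j(Z;\theta_t)\bigl(Z^\top R^{j-1}\theta\bigr)\right]
\;-\;\tfrac12\,\|\theta\|^2,
\]
where I have used $\sum_j w_j(Z;\theta_t)=1$ to factor out the common quadratic term.

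Next I would take the gradient in $\theta$, obtaining
\[
\nabla_\theta Q(\theta,\theta_t)
\;=\;
\mathbb{E}_Z\!\left[\sum_{j=1}^k w_j(Z;\theta_t)\,(R^{j-1})^\top Z\right]\;-\;\theta,
\]
and set it to zero to solve the M-step \eqref{eq:m_step}, yielding the fixed-point recursion
\[
\theta_{t+1}
\;=\;
\mathbb{E}_Z\!\left[\sum_{j=1}^k w_j(Z;\theta_t)\,(R^{j-1})^\top Z\right].
\]
(Since the $\theta$-dependent part of $Q$ has Hessian $-I$, this critical point is the unique global maximizer, so the $\arg\max$ is well defined.)

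Finally, I would rewrite the weights $w_j(Z;\theta_t)$ in the exponential-family form required by the lemma. Writing out $\phi(Z-R^{j-1}\theta_t)$ explicitly gives a factor $(2\pi)^{-d/2}\exp\bigl(-\tfrac12\|Z\|^2-\tfrac12\|\theta_t\|^2\bigr)\exp\bigl((R^{j-1}\theta_t)^\top Z\bigr)$, and the leading factor is independent of $j$, so it cancels between numerator and denominator of \eqref{eq:weight_fn}. This yields
\[
w_j(Z;\theta_t)
\;=\;
\frac{\pi_j\,\exp\!\bigl((R^{j-1}\theta_t)^\top Z\bigr)}
     {\sum_{\ell=1}^k \pi_\ell\,\exp\!\bigl((R^{\ell-1}\theta_t)^\top Z\bigr)},
\]
and substituting into the expression for $\theta_{t+1}$ gives exactly the stated form of $M(\theta_t)$. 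There is no substantive obstacle here: the only point to treat carefully is the cancellation of the quadratic term $\tfrac12\|R^{j-1}\theta\|^2=\tfrac12\|\theta\|^2$, which is what makes the M-step explicitly solvable and the update linear in the soft-assignment weights.
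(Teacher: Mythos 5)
Your proposal is correct and follows essentially the same route as the paper's proof: compute $\nabla_\theta Q$, use the orthogonality of $R$ (so that $\|R^{j-1}\theta\|^2=\|\theta\|^2$ and $(R^{j-1})^\top R^{j-1}=I$) together with $\sum_j w_j=1$ to reduce the first-order condition to $\theta=\mathbb{E}_Z[\sum_j w_j(Z;\theta_t)(R^{j-1})^\top Z]$, and rewrite the weights in exponential form by cancelling the $j$-independent Gaussian factor. Your added remark that the $\theta$-dependent part of $Q$ has Hessian $-I$, so the critical point is the unique global maximizer, is a small but worthwhile justification that the paper leaves implicit.
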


\begin{proof}
    First, we rewrite the weight function \eqref{eq:weight_fn} as
    \[
    w_j(x;\theta)=\frac{\pi_j\cdot\exp\left((R^{j-1}\theta)^\top x\right)}{\sum_{\ell=1}^k\pi_\ell\cdot\exp\left((R^{\ell-1}\theta)^\top x\right)}.
    \]
    The gradient of \( Q \) with respect to \( \theta \) is given by
    \begin{align*}
    \nabla_{\theta} Q &=\mathbb{E}_{Z}\left[\sum_{j=1}^k w_j(Z;\theta_t)(R^{j-1})^\top(Z-R^{j-1}\theta)\right] \\
    &=\mathbb{E}_{Z}\left[\sum_{j=1}^k w_j(Z;\theta_t)(R^{j-1})^\top Z\right]-\mathbb{E}_{Z}\left[\sum_{j=1}^k w_j(Z;\theta_t)\right]\cdot\theta \\
    &=\mathbb{E}_{Z}\left[\sum_{j=1}^k w_j(Z;\theta_t)(R^{j-1})^\top Z\right]-\theta,
    \end{align*}
    where we used the orthogonality of \( R \) and the fact that \( \sum_{j=1}^k w_j(Z;\theta_t) = 1 \).

    Setting \( \nabla_{\theta} Q = 0 \) and solving for \( \theta \), we obtain the population EM update for the location parameter:
    \begin{equation}
        \theta_{t+1} =\mathbb{E}_Z\left[\sum_{j=1}^k w_j(Z;\theta_t)(R^{j-1})^\top Z\right]
        \label{eq:theta_upd_raw}
    \end{equation}
    which completes the proof.
\end{proof}

\subsection{Population EM operator and Log-likelihood}\label{app:NLL_M}
\begin{lemma}\label{lem:NLL_M} 
The population log-likelihood \( \mathcal{L}(\theta) \) defined in \eqref{eq:e_log_l} and the population EM operator \( M(\theta) \) are related by the equation:
    $$
    \nabla_\theta[-\mathcal{L}(\theta)]=\theta-M(\theta).
    $$
\end{lemma}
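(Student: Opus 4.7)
The plan is a direct calculation: differentiate $\log f(z;\theta)$ in $\theta$, exploit the Gaussian form of each component density to recognize the weights $w_j(z;\theta)$, use the orthogonality of $R$ to simplify, and then take expectation.

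First I would interchange differentiation and expectation to write $\nabla_\theta \mathcal{L}(\theta) = \E_{Z}[\nabla_\theta \log f(Z;\theta)]$. This interchange is justified by standard dominated-convergence arguments for Gaussian densities, since for $\theta$ in any compact set the integrand and its gradient are dominated by an integrable function of $Z$. Next, using $f(z;\theta) = \sum_{j=1}^k \pi_j \phi(z - R^{j-1}\theta)$ and
\[
\nabla_\theta \phi(z - R^{j-1}\theta) = \phi(z - R^{j-1}\theta)\,(R^{j-1})^\top (z - R^{j-1}\theta),
\]
the chain rule gives
\[
\nabla_\theta \log f(z;\theta) = \frac{\sum_{j=1}^k \pi_j \phi(z - R^{j-1}\theta)\,(R^{j-1})^\top(z - R^{j-1}\theta)}{\sum_{\ell=1}^k \pi_\ell \phi(z - R^{\ell-1}\theta)} = \sum_{j=1}^k w_j(z;\theta)\,(R^{j-1})^\top(z - R^{j-1}\theta),
\]
by recognizing the weight function $w_j(z;\theta)$ defined in~\eqref{eq:weight_fn}.

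Then I would use the orthogonality of $R^{j-1}$, which yields $(R^{j-1})^\top R^{j-1}\theta = \theta$, together with $\sum_{j=1}^k w_j(z;\theta) = 1$, to collapse the $\theta$-terms:
\[
\sum_{j=1}^k w_j(z;\theta)\,(R^{j-1})^\top(z - R^{j-1}\theta) = \sum_{j=1}^k w_j(z;\theta)\,(R^{j-1})^\top z \;-\; \theta.
\]
Taking the expectation over $Z \sim \mathcal{N}(0,I)$ produces $\nabla_\theta \mathcal{L}(\theta) = M(\theta) - \theta$, exactly as in the E/M-step derivation already carried out in Lemma~\ref{lem:EM_upd}. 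Negating both sides yields the desired identity $\nabla_\theta[-\mathcal{L}(\theta)] = \theta - M(\theta)$.

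There is no real obstacle here: the calculation is mechanical, and the only subtlety is the formal justification for differentiating under the expectation, which is standard for mixtures of Gaussian densities with bounded-norm parameters. In fact, much of the needed algebra already appears inside the proof of Lemma~\ref{lem:EM_upd}, so the present lemma can be viewed as extracting and repackaging the gradient calculation that was implicit there.
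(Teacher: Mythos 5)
Your proposal is correct and amounts to the same direct computation as the paper's proof: the paper first factors the common Gaussian factor $\exp\bigl(-(\|x\|^2+\|\theta\|^2)/2\bigr)$ out of $f(x;\theta)$ (using $\|R^{j-1}\theta\|=\|\theta\|$) and then differentiates the remaining log-sum-exp term, whereas you differentiate $\log f$ term by term and recognize the posterior weights $w_j$ before applying orthogonality --- the two orderings yield identical algebra. Your explicit remark on interchanging differentiation and expectation is a harmless addition that the paper leaves implicit.
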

\begin{proof}
    Since \( f({x};\theta) \) is the probability density function (p.d.f.) of the Gaussian mixture \eqref{eq:gmm}, it can be expressed as
    \begin{align*}
    f({x};\theta)&=\sum_{j=1}^k\pi_j\cdot\phi\left({{x}-R^{j-1}\theta}\right)\\
    &=(2\uppi)^{-d/2} \cdot \exp\left(-\frac{\|{x}\|^2+\|\theta\|^2}{2}\right) \cdot \sum_{j=1}^k\pi_j\cdot\exp\left((R^{j-1}\theta)^\top x\right).
    \end{align*}
    Taking the logarithm, we obtain
    \[
        \log f({x};\theta) = -\frac{d}{2} \log(2\uppi) - \frac{\|{x}\|^2 + \|\theta\|^2}{2} + \log\left(\sum_{j=1}^k\pi_j\cdot\exp\left((R^{j-1}\theta)^\top x\right)\right).
    \]
    Thus, the negative population log-likelihood is given by
    \begin{align}
        -\mathcal{L}(\theta) &= -\mathbb{E}_{Z}\left[\log f(Z;\theta)\right] \notag\\
        &= \frac{d}{2} \log(2\uppi) + \frac{d + \|\theta\|^2}{2} - \mathbb{E}_{Z}\left[\log\left(\sum_{j=1}^k\pi_j\cdot\exp\left((R^{j-1}\theta)^\top Z\right)\right)\right].\notag
    \end{align}  
    Differentiating both sides, we obtain the gradient of \( -\mathcal{L}(\theta) \):
    \begin{align*}
    \nabla_\theta[-\mathcal{L}(\theta)] &= \theta - \mathbb{E}_{Z}\left[\frac{\sum_{j=1}^k\pi_j\cdot\exp\left((R^{j-1}\theta)^\top Z\right)(R^{j-1})^\top Z}{\sum_{j=1}^k\pi_j\cdot\exp\left((R^{j-1}\theta)^\top Z\right)}\right]\\
    &= \theta - M(\theta),
    \end{align*}
    which completes the proof.
\end{proof}

\section{Perturbation bound}\label{app:perturb_bound}
\begin{lemma}
\label{lem:perturb}
There exist universal constants $c,c' > 0$ such that for any radius $r>0$, confidence level $\delta \in (0,1)$, and sample size $n \ge c'\,\bigl(d + \log(1/\delta)\bigr)$, the following holds with probability at least $1-\delta$:
\[
\sup_{\|\theta\|\le r}
\left\|\,M_{n}(\theta)\;-\;M(\theta)\right\|
\;\;\le\;
c r\sqrt{\frac{d + \log(1/\delta)}{n}}.
\]
\end{lemma}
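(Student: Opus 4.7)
My plan is to recast the claim as a uniform empirical-process bound and control it by the standard symmetrization-and-covering machinery. By duality of the Euclidean norm,
\[
\sup_{\|\theta\|\le r}\bigl\|M_n(\theta)-M(\theta)\bigr\|
\;=\;
\sup_{\|\theta\|\le r,\;\|u\|\le 1}
\bigl|(\mathbb{P}_n-\mathbb{P})\,f_{\theta,u}\bigr|,
\qquad
f_{\theta,u}(z):=\sum_{j=1}^k w_j(z;\theta)\,(R^{j-1}u)^{\top} z,
\]
where $\mathbb{P}=\mathcal{N}(0,I)$ and $\mathbb{P}_n$ is the empirical measure on $Z_1,\ldots,Z_n$. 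Thus it suffices to control the supremum of an empirical process indexed by the class $\mathcal F_r:=\{f_{\theta,u}:\|\theta\|\le r,\,\|u\|\le 1\}$.

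Next, I would establish the analytic properties of $\mathcal F_r$ that drive the bound. Since $w_j\in[0,1]$, $\sum_j w_j=1$, and each $R^{j-1}$ is orthogonal, one obtains the sub-Gaussian envelope $|f_{\theta,u}(z)|\le\|z\|$. Differentiating the softmax weights exactly as in the computation of $\partial M/\partial\theta$ used in Lemma~\ref{lem:M_jacobian} yields the Lipschitz estimates $\|\nabla_\theta f_{\theta,u}(z)\|\lesssim \|z\|^{2}$ and $\|\nabla_u f_{\theta,u}(z)\|\le\|z\|$. Consequently, on the high-probability event $\mathcal E:=\{\max_i\|Z_i\|\le M\}$ with $M\asymp\sqrt{d+\log(1/\delta)}$ (guaranteed by standard Gaussian concentration and a union bound), the index-to-function map is $O(M^{2})$-Lipschitz on $B_r\times B_1\subset\mathbb{R}^{2d}$.

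With these ingredients in hand, I would (i) apply symmetrization to replace the centered process by a Rademacher average, (ii) cover $B_r\times B_1$ at scale $\varepsilon$ by at most $(c r M^{2}/\varepsilon)^{2d}$ balls in $\mathbb{R}^{2d}$, (iii) invoke Dudley's entropy integral to obtain an expected supremum of order $r\sqrt{d/n}$ after absorbing polylog factors into $M$, and (iv) upgrade expectation to high probability by bounded differences on $\mathcal E$, paying an additional $O\bigl(r\sqrt{\log(1/\delta)/n}\bigr)$. Summing and consolidating constants produces the claimed bound.

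The main obstacle is handling the unbounded, $\|z\|^{2}$-sized local Lipschitz constant cleanly: a naive application of Dudley/bounded differences introduces spurious polynomial-in-$d$ factors. The standard remedies are peeling over the magnitude of $\|Z_i\|$ (geometrically unlikely to be large under $\mathcal{N}(0,I)$) or directly invoking Talagrand's inequality for subgaussian classes with weak variance bounded via the envelope. A related subtlety is that the zero-order discrepancy $M_n(0)-M(0)=A^{\top}\bar Z$ does not shrink with $r$; it contributes a term of order $\sqrt{(d+\log(1/\delta))/n}$ that is absorbed into the stated bound precisely in the regime $r\succsim \sqrt{(d+\log(1/\delta))/n}$, which is the regime in which the lemma is invoked in the proof of Theorem~\ref{thm:main}.
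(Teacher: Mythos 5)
Your overall framework (dualize the norm, symmetrize, cover) belongs to the same family of argument the paper uses, but the step that actually produces the rate is different, and as you have set it up it does not close. The paper covers only the direction $u$ with a $1/8$-net of the sphere and then handles the supremum over $\theta$ for each fixed $u_j$ via the Ledoux--Talagrand contraction principle: since $\theta\mapsto\sum_{\ell}(w_\ell(x;\theta)-\pi_\ell)\,x^\top R^{\ell-1}u$ depends on $\theta$ only through linear functionals of the form $\theta^\top x\,x^\top u$, the Rademacher average is dominated (up to constants) by $r\,\bigl\|\tfrac1n\sum_i\varepsilon_i Z_iZ_i^\top\bigr\|_{\mathrm{op}}$, and this operator norm concentrates at $\sqrt{d/n}$. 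Your route instead covers the joint ball $B_r\times B_1$ using the pointwise Lipschitz bound $\|\nabla_\theta f_{\theta,u}(z)\|\lesssim\|z\|^2$ and runs Dudley. The problem is that the relevant Lipschitz constant is then $\max_i\|Z_i\|^2\asymp d$ (or, in the empirical $L_2$ metric, $(\tfrac1n\sum_i\|Z_i\|^4)^{1/2}\asymp d$) \emph{typically}, not merely in the tail, so Dudley over a $2d$-dimensional index set of diameter $\asymp r$ yields a bound of order $r\,d\sqrt{d/n}$ rather than $r\sqrt{d/n}$. Neither of your proposed remedies repairs this: peeling on $\|Z_i\|$ only controls the unlikely event that some $\|Z_i\|$ is large, and Talagrand's concentration inequality controls fluctuations of the supremum around its mean, not the mean itself. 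The missing idea is that the increments are quadratic forms in Gaussians with sub-exponential (Bernstein-type) rather than worst-case behavior; the contraction-to-operator-norm reduction (or an equivalent Bernstein chaining) is what replaces the scalar factor $d$ by the matrix quantity $\bigl\|\tfrac1n\sum_i\varepsilon_iZ_iZ_i^\top\bigr\|_{\mathrm{op}}\asymp\sqrt{d/n}$.

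On the other hand, your closing remark about the zero-order term is correct and worth keeping: $M_n(0)-M(0)=A^\top\bar Z$ has norm of order $\sqrt{(d+\log(1/\delta))/n}$ independently of $r$, so the advertised bound $c\,r\sqrt{(d+\log(1/\delta))/n}$ cannot hold for arbitrarily small $r>0$. The paper's own proof exhibits exactly this: the term $A_{u_j}$ is bounded by $\sqrt{(d\log 17+\log(1/\delta))/n}$ with no factor of $r$, so what is actually established is a bound of the form $c\,(1+r)\sqrt{(d+\log(1/\delta))/n}$, and the stated form is recovered only when $r$ is bounded below (which is the regime in which the lemma is used). You identified a genuine imprecision in the statement here.
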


\begin{proof}
Let $\mathcal{S}^d := \{u \in \mathbb{R}^d : \|u\|_2 = 1\}$ be the unit sphere, and define
\[
Z
\;:=\;
\sup_{\|\theta\|\le r}
\,\Bigl\|\,M_{n}(\theta)\;-\;M(\theta)\Bigr\|_2
\;=\;
\sup_{\|\theta\|\le r}
\,\sup_{u \,\in\, \mathcal{S}^d}
\,\bigl(M_{n}(\theta) - M(\theta)\bigr)^\top u.
\]
By a standard covering argument (e.g.\ Chapter~5 of~\cite{Wainwright_2019}), there is a $1/8$-net $\{u_1,\dots,u_N\}\subset \mathcal{S}^d$ with $N \le 17^d$ such that
\[
Z 
\;\;\le\;\;
\frac{8}{7}\;\max_{1\le j\le N}
\sup_{\|\theta\|\le r}
\,\bigl(M_{n}(\theta) - M(\theta)\bigr)^\top u_j.
\]
Hence it suffices to control
\[
Z_{u_j}
\;:=\;
\sup_{\|\theta\|\le r}
\,\bigl(M_{n}(\theta) - M(\theta)\bigr)^\top u_j,
\quad
\text{for each } j=1,\dots,N.
\]
Notice that
\begin{align*}
&M_{n}(\theta) - M(\theta) =\frac1n\sum_{i=1}^n \left[ \sum_{\ell=1}^k w_\ell(Z_i;\theta)(R^{\ell-1})^\top Z_i\right]
\;-\;
\mathbb{E}\left[\sum_{\ell=1}^k w_\ell(Z;\theta)(R^{\ell-1})^\top Z\right]\\
&=\frac1n\sum_{i=1}^n \left[ \sum_{\ell=1}^k \pi_\ell(R^{\ell-1})^\top Z_i\right]
-
\mathbb{E}\left[\sum_{\ell=1}^k \pi_\ell(R^{\ell-1})^\top Z\right]\\
&\quad+\frac1n\sum_{i=1}^n\left[  \sum_{\ell=1}^k (w_\ell(Z_i;\theta)-\pi_\ell)(R^{\ell-1})^\top Z_i\right]
\;-\;
\mathbb{E}\left[\sum_{\ell=1}^k (w_\ell(Z;\theta)-\pi_\ell)(R^{\ell-1})^\top Z\right]\\
\end{align*}
Thus we can separate:
\[
Z_{u_j}=A_{u_j}+B_{u_j}
\]
where 
\begin{align*}
A_{u_j}&:=\sum_{\ell=1}^k \pi_\ell \left[  \frac1n\sum_{i=1}^n Z_i^\top R^{\ell-1}u_j 
-
\mathbb{E}  Z^\top R^{\ell-1} u_j \right],\\
B_{u_j}&:=
\sup_{\|\theta\|\le r} \left\{
 \frac1n\sum_{i=1}^n\left[  \sum_{\ell=1}^k (w_\ell(Z_i;\theta)-\pi_\ell)Z_i^\top R^{\ell-1} u_j\right]\right.\\
&\left.\qquad\qquad\;-\;
\mathbb{E}\left[\sum_{\ell=1}^k (w_\ell(Z;\theta)-\pi_\ell)Z^\top R^{\ell-1} u_j\right] 
\right\}.
\end{align*}
Noting that $Z_i^\top R^{\ell-1} u_j\,\,\stackrel{\text{iid}}{\sim}\,\,\mathcal{N}(0,1)$ and that $N\le17^d$, standard concentration bounds yield that
$$
\Pr\left[\max_{j\in[N]}A_{u_j}\le\sqrt{\frac{d\log17+\log(1/\delta)}{n}}\right]\ge1-\delta.
$$

We proceed to bounding $B_{u_j}$ uniformly over $\|\theta\|\le r$. A standard symmetrization trick (using Rademacher signs $\{\varepsilon_i\}$) shows
\[
\mathbb{E}\bigl[\exp\bigl(\lambda\,B_{u_j}\bigr)\bigr]
\;\;\le\;\;
\mathbb{E}\exp\left(
  \sup_{\|\theta\|\le r}
  \tfrac{2\lambda}{n}\sum_{i=1}^n \varepsilon_i\sum_{\ell=1}^k(w_\ell(Z_i;\theta)-\pi_\ell)(Z_i^\top R^{\ell-1}u_j)
\right).
\]
Next, we note that the map $\theta\mapsto \sum_{\ell=1}^k (w_\ell(x;\theta)-\pi_\ell)(x^\top R^{\ell-1} u)$ is Lipschitz in the quantity $\theta^\top x x^\top u$.  By the Ledoux--Talagrand contraction principle, the supremum over $\theta$ can be replaced up to constant factors with a supremum of $\|\theta\|\left\|\tfrac1n\sum_{i=1}^n \varepsilon_i\,Z_iZ_i^\top\right\|_{\mathrm{op}}$.  Since we restrict $\|\theta\|\le r$, this is at most
\[
r \;\Bigl\|\tfrac1n \sum_{i=1}^n \varepsilon_i\,Z_iZ_i^\top\Bigr\|_{\mathrm{op}},
\]
and the operator norm $\|\cdot\|_{\mathrm{op}}$ can be discretized again via $\max_{j} \sum_{i=1}^n \varepsilon_i\,(Z_i^\top u_j)^2$ with $u_j\in \mathcal{S}^d$.  

Because $Z_i^\top u_j\sim\mathcal{N}(0,1)$ (for fixed $u_j$), the square $(Z_i^\top u_j)^2$ is sub-exponential. Standard concentration bounds (cf.\ \cite{vershynin2018high}) then show
\[
\Pr\left[
  B_{u_j}
  \;\le\;
  C\,r\,\sqrt{\frac{d + \log(1/\delta)}{n}}
\right]
\;\ge\;1-\delta,
\]
for some universal constant $C>0$.  A union bound over $j=1,\dots,N$ then introduces a factor at most $N \le 17^d$, which can be absorbed into the $\log(1/\delta)$ term.

Combining the above steps and recalling $Z \le \tfrac{8}{7}\max_j Z_{u_j}$ completes the proof, establishing the stated high-probability deviation bound.

\end{proof}

\end{document}